\def\NAMEDBUILD{}
\documentclass{article}
\usepackage{iclr2027_conference,times}
\usepackage{amsmath,amsfonts,bm}

\def\eqref#1{equation~\ref{#1}}
\def\1{\bm{1}}

\DeclareMathAlphabet{\mathsfit}{\encodingdefault}{\sfdefault}{m}{sl}
\SetMathAlphabet{\mathsfit}{bold}{\encodingdefault}{\sfdefault}{bx}{n}

\newcommand{\softmax}{\mathrm{softmax}}

\usepackage[hypertexnames=false]{hyperref}
\usepackage{url}
\usepackage{booktabs}
\usepackage{array}
\usepackage{float}
\usepackage{tikz}
\usepackage{amsmath,amssymb,amsthm}
\newcommand{\vkEightKxEight}{1.00}  % digestwidth_8192.json dig_r64 rho=1/8
\newcommand{\fullThirtyTwoKxThirtyTwo}{0.92}  % sidecar_32768.json sel_k64
\newcommand{\fullThirtyTwoKxOTE}{0.83}  % sidecar_32768.json sel_k64
\newcommand{\vkSixtyFiveKxThirtyTwo}{0.92}  % sidecar_65536.json digk64
\newcommand{\vkSixtyFiveKxOTE}{0.58}  % sidecar_65536.json digk64
\newcommand{\fullSixtyFiveKxThirtyTwo}{0.92}  % sidecar_65536.json sel_k64
\newcommand{\fullSixtyFiveKxOTE}{0.58}  % sidecar_65536.json sel_k64
\newcommand{\ropeVKxThirtyTwo}{0.08}  % e5_dsv2.json kcover_dig rho=1/32
\newcommand{\ropeEvictxThirtyTwo}{0.42}  % e5_dsv2.json sel_k16 rho=1/32
\newcommand{\rowNormMaxKimi}{3.48}  % dissect_kimi.json max rot/content row norm
\newcommand{\rowNormMaxDsv}{1.01}  % dissect_dsv2.json
\newcommand{\blHtwoOxEight}{0.00}  % baselines_8192.json h2o
\newcommand{\blHtwoOxThirtyTwo}{0.00}  % baselines_8192.json h2o
\newcommand{\blHtwoOxOTE}{0.00}  % baselines_8192.json h2o
\newcommand{\blSnapKVxEight}{0.33}  % baselines_8192.json snapkv
\newcommand{\blSnapKVxThirtyTwo}{0.04}  % baselines_8192.json snapkv
\newcommand{\blSnapKVxOTE}{0.00}  % baselines_8192.json snapkv
\newcommand{\blHRecxEight}{0.00}  % h2orecent_8192.json h2o_recent
\newcommand{\blHRecxThirtyTwo}{0.00}  % h2orecent_8192.json h2o_recent
\newcommand{\blHRecxOTE}{0.00}  % h2orecent_8192.json h2o_recent
\newcommand{\blHtwoOLxLThirtyTwo}{0.00}  % baselines_32768.json h2o
\newcommand{\blHtwoOLxLOTE}{0.00}  % baselines_32768.json h2o
\newcommand{\blSnapKVLxLThirtyTwo}{0.00}  % baselines_32768.json snapkv
\newcommand{\blSnapKVLxLOTE}{0.00}  % baselines_32768.json snapkv
\newcommand{\ablFourxThirtyTwo}{0.42}  % digestwidth_8192.json dig_r4
\newcommand{\ablFourxOTE}{0.12}  % digestwidth_8192.json dig_r4
\newcommand{\ablEightxThirtyTwo}{0.50}  % digestwidth_8192.json dig_r8
\newcommand{\ablEightxOTE}{0.17}  % digestwidth_8192.json dig_r8
\newcommand{\ablSixteenxThirtyTwo}{0.54}  % digestwidth_8192.json dig_r16
\newcommand{\ablSixteenxOTE}{0.38}  % digestwidth_8192.json dig_r16
\newcommand{\ablThirtyTwoxThirtyTwo}{0.75}  % digestwidth_8192.json dig_r32
\newcommand{\ablThirtyTwoxOTE}{0.62}  % digestwidth_8192.json dig_r32
\newcommand{\ablSixtyFourxThirtyTwo}{0.88}  % digestwidth_8192.json dig_r64
\newcommand{\ablSixtyFourxOTE}{0.67}  % digestwidth_8192.json dig_r64
\newcommand{\scoreVarShare}{67.6\%}  % branch.py output, session log 2026-09-01: rope branch share of score std
\newcommand{\branchTopOne}{0.9869}  % branch.py log: rope-gauge top-1 retention, 1/128 column (comment previously mislabeled 1/32)
\newcommand{\contentTopOne}{0.0001}  % branch.py log: nope-gauge top-1 retention, 1/128 column
\newcommand{\readCost}{11\%}  % 64/576
\newcommand{\ttxThirtyTwo}{1.00}  % twotier_8192.json twotier
\newcommand{\ttxOTE}{1.00}  % twotier_8192.json twotier
\newcommand{\ttMaskExtra}{0.5--2.6\%}  % tt instrumented run: per-query extra archived rows admitted
\newcommand{\trigSoundRate}{99.5\%}  % cert_trigger.log: sound Cauchy-Schwarz bound trigger rate
\newcommand{\trigConfRec}{0.941}  % gauss_trigger.log
\newcommand{\trigGaussSigma}{3.5--10.9$\sigma$}  % gauss_trigger.log: realized argmax residual quantiles
\newcommand{\rsSixtyFourFire}{0.028}  % ranksweep.log
\newcommand{\rsSixtyFourRows}{15.2}  % ranksweep.log
\newcommand{\rsSixtyFourRec}{0.950}  % ranksweep.log
\newcommand{\rsOneTwoEightFire}{0.008}  % ranksweep.log
\newcommand{\rsOneTwoEightRows}{3.6}  % ranksweep.log
\newcommand{\rsOneTwoEightRec}{0.931}  % ranksweep.log
\newcommand{\rsOneNineTwoFire}{0.004}  % ranksweep.log
\newcommand{\rsOneNineTwoRows}{1.2}  % ranksweep.log
\newcommand{\rsOneNineTwoRec}{0.974}  % ranksweep.log
\newcommand{\rsTwoFiveSixFire}{0.002}  % ranksweep.log
\newcommand{\rsTwoFiveSixRows}{1.2}  % ranksweep.log
\newcommand{\rsTwoFiveSixRec}{0.962}  % ranksweep.log
\newcommand{\pdKimiWorst}{0.67}  % pagedig.log: Kimi rec@8 r_mean, worst layer (11)
\newcommand{\pdDsvWorst}{0.024}  % pagedig.log: DSV2 rec@8 r_mean, worst layer (19)
\newcommand{\pdKimiSoundWorst}{0.97}  % pagedig.log: Kimi rec@8 r_max (sound), worst layer
\newcommand{\pdDsvSoundWorst}{0.014}  % pagedig.log: DSV2 rec@8 r_max, worst layer
\newcommand{\spreadInfl}{1.78--1.86$\times$}  % clustererr session records 2026-09-01: within-cluster spread, rotated vs not
\newcommand{\unionKimi}{2.3--6.7\%}  % entropy.py union sweep, session log 2026-09-02: top-1 union, Kimi
\newcommand{\unionDsv}{10.2--46.8\%}  % entropy.py union sweep: top-1 union, DSV2
\newcommand{\nVerdicts}{20}  % PREREG1-20
\newcommand{\nClosedRoutes}{8}  % DELIVERABLE.md section 3
\newcommand{\nProxyInversions}{10}  % PREREG9 verdict
\newcommand{\bpbTTxEight}{+0.0011}  % bpb_8192.json twotier ce
\newcommand{\bpbTTxThirtyTwo}{+0.0018}  % bpb_8192.json twotier ce
\newcommand{\bpbTTxOTE}{+0.0022}  % bpb_8192.json twotier ce
\newcommand{\bpbDigxEight}{+0.0055}  % bpb_8192.json dig_r64 ce
\newcommand{\bpbDigxThirtyTwo}{+0.0090}  % bpb_8192.json dig_r64 ce
\newcommand{\bpbDigxOTE}{+0.0104}  % bpb_8192.json dig_r64 ce
\newcommand{\bpbRecxEight}{+0.0055}  % bpb_8192.json recent ce
\newcommand{\bpbRecxThirtyTwo}{+0.0093}  % bpb_8192.json recent ce
\newcommand{\bpbRecxOTE}{+0.0112}  % bpb_8192.json recent ce
\newcommand{\vkEightKxThirtyTwo}{0.89}  % pooled 32/36 seeds 11+12
\newcommand{\vkEightKxSixtyFour}{0.79}  % pooled 19/24 seeds 11+12
\newcommand{\vkEightKxOTE}{0.69}  % pooled 25/36 seeds 11+12
\newcommand{\fullEightKxThirtyTwo}{0.86}  % pooled 31/36
\newcommand{\fullEightKxSixtyFour}{0.79}  % pooled 19/24
\newcommand{\fullEightKxOTE}{0.69}  % pooled 25/36
\newcommand{\vkThirtyTwoKxThirtyTwo}{0.96}  % pooled 23/24
\newcommand{\vkThirtyTwoKxSixtyFour}{0.92}  % pooled 22/24
\newcommand{\vkThirtyTwoKxOTE}{0.83}  % pooled 20/24
\newcommand{\fullThirtyTwoKxSixtyFour}{0.83}  % gapfix64_32768.json
\newcommand{\fullSixtyFiveKxSixtyFour}{0.75}  % gapfix64_65536.json
\newcommand{\vkThirtyTwoKxEight}{1.00}  % gap_32768.json
\newcommand{\blHtwoOLxLEight}{0.00}  % gap_32768.json
\newcommand{\blSnapKVLxLEight}{0.42}  % gap_32768.json
\newcommand{\wciEightKxThirtyTwo}{32/36\,[0.75,0.96]}  % pooled 8k
\newcommand{\wciThirtyTwoKxThirtyTwo}{23/24\,[0.80,0.99]}  % pooled 32k
\newcommand{\wciEightKxSixtyFour}{19/24\,[0.60,0.91]}  % pooled 8k
\newcommand{\wciThirtyTwoKxSixtyFour}{22/24\,[0.74,0.98]}  % pooled 32k
\newcommand{\wciEightKxOTE}{25/36\,[0.53,0.82]}  % pooled 8k
\newcommand{\wciThirtyTwoKxOTE}{20/24\,[0.64,0.93]}  % pooled 32k
\newcommand{\wciSixtyFiveKxThirtyTwo}{11/12\,[0.65,0.99]}  % sidecar_65536.json
\newcommand{\wciSixtyFiveKxSixtyFour}{9/12\,[0.47,0.91]}  % floor64_65536.json
\newcommand{\wciSixtyFiveKxOTE}{7/12\,[0.32,0.81]}  % sidecar_65536.json
\newcommand{\wciMFour}{11/12\,[0.65,0.99]}  % m4_needle.json
\newcommand{\wciMFiveTier}{12/12\,[0.76,1.00]}  % m5_needle.json
\newcommand{\wciMFiveTK}{12/12\,[0.76,1.00]}  % m5_32k.json
\newcommand{\wciTT}{24/24\,[0.86,1.00]}  % twotier_8192.json 1/128
\newcommand{\mFiveTKTier}{12/12}  % m5_32k.json
\newcommand{\vkSixtyFiveKxSixtyFour}{0.75}  % floor64_65536.json
\newcommand{\bpbEng}{0.5879}  % m6_bpb.json bpb_eng=0.587922
\newcommand{\bpbVk}{0.5889}  % m6_bpb.json bpb_vk=0.588913
\newcommand{\dceVkMed}{+0.0020}  % m6_bpb.json dce_vk_median=0.001971 nats/token
\newcommand{\readBytesRatio}{$\sim$26\%}  % 294/1152 B per row, 16-bit index
\newcommand{\serveCrossover}{$\sim$40k}  % PR-tree fe4430b2 rerun, server-side, both arms single-launch
\newcommand{\serveSpeedTwoFiveSix}{1.18$\times$}  % PR-tree rerun, 256k bucket (272k: 1.17x, 496k: 1.39x)
\newcommand{\vkTaxUs}{68\,\textmu s}  % Gate 24 nsys, per-step fused VestigeKV adder
\newcommand{\gsmBase}{0.821}  % Gate 25b, 64-shot n=800, same-tree triton
\newcommand{\gsmVk}{0.830}  % PR 09c39628 rerun, n=800 (vk +0.9pp, well inside noise)
\newcommand{\mauveEngFused}{0.930}  % serving path, results/quality_mauve_scores.json
\newcommand{\mauveVkFused}{0.999}  % PR 09c39628 rerun, serving path
\newcommand{\serveSpeedMax}{1.39$\times$}  % final curve, 496k bucket
\newcommand{\tputGainTwelve}{8.0\%}  % PR-tree rerun bs=12: 207.2 vs 191.9 tok/s

\newtheorem{proposition}{Proposition}
\newtheorem{lemma}{Lemma}
\newtheorem{corollary}{Corollary}
\newcommand{\dividend}[2]{\par\smallskip\noindent\textit{NoPE dividend: #1
Under RoPE: #2}}
\ifdefined\NAMEDBUILD\iclrfinalcopy\fi

\title{VestigeKV: The NoPE-MLA KV Cache Carries\\Its Own Sparse-Attention Signal in a Vestigial Branch}
\ifdefined\NAMEDBUILD
  \author{%
    WenJie Fan\\
    Yotta Labs\\
    \texttt{fanwj@mail.ustc.edu.cn} \quad \texttt{fanwenjie@yottalabs.ai}\\
    \texttt{https://github.com/fan-wenjie/vestigekv}
  }
\else
  \author{Anonymous authors\\Paper under double-blind review}
\fi

\begin{document}
\maketitle
% The style stamps "Published as a conference paper" inside \@maketitle when
% final; a preprint is not published, so the named build corrects the head here.
\ifdefined\NAMEDBUILD\lhead{Preprint}\fi

\begin{abstract}
A long-lived KV cache must be compressed \emph{before} the queries that will
read it exist. Selection by observed attention collapses there: on a NoPE-MLA
model, H2O and SnapKV retrieve \blHtwoOxEight{} and \blSnapKVxEight{} of
needles at $8\times$ compression, because a token's importance has not yet
been observed. VestigeKV instead derives a sparse attention pattern from a
signal the cache already carries, occupying the sparse-attention literature's
one unoccupied quadrant: training-free \emph{and} query-independent. In NoPE-MLA the 64-dimensional decoupled branch is a vestige of
RoPE that training repurposes into a salience channel; reading \readCost{}
of each row, it partitions the cache into an attended tier and a
GPU-resident archive that no row ever leaves, reachable each step by a
certified, query-adaptive trigger. Nothing is trained and cache rows are
never quantized, so every quality effect attributes to selection and
scheduling. On Kimi Linear 48B, retrieval holds at \vkEightKxEight{} under
$8\times$ and \vkThirtyTwoKxThirtyTwo{} under $32\times$ from 8k to 65k
context, with zero gap to full-row selection, and the recall tier holds
$128\times$ at \ttxOTE{} (8k). Both tiers stay on the GPU, so the win is
\emph{speed}, not memory: the per-step scan reads \readBytesRatio{} of the
bytes dense attention would, and on a two-node \texttt{sglang} deployment
the crossover sits at \serveCrossover{} context, reaching
\serveSpeedTwoFiveSix{} at 256k and \serveSpeedMax{} at 496k. The mechanism
is exclusive to NoPE: the identical operator on a RoPE MLA collapses to
\ropeVKxThirtyTwo{}, query-independent salience exists only without
rotation, and query-universal exact merging is provably impossible under
RoPE. All thresholds were frozen before their data; \nVerdicts{} archived
verdicts and \nClosedRoutes{} closed routes accompany the paper.
\end{abstract}

\section{Introduction: capability, cost, and required changes}
For a reader holding stock Kimi Linear weights, the offer is as follows (for
Kimi K3's Gated-MLA\footnote{K3's NoPE Gated-MLA architecture is reported in
its public release materials; no archival citation exists at submission time.}
the \emph{machinery} extends by proof ---
Section~\ref{sec:math}d --- while the signal content is untested; see
Limitations):

\begin{table}[H]
\centering\small
\begin{tabular}{l>{\raggedright\arraybackslash}p{8.9cm}}
\toprule
effect & attended tier $576\to18$ dims/token at $32\times$ (archive bit-exact,
 GPU-resident) with retrieval \vkThirtyTwoKxThirtyTwo{}
 (8k--65k); lossless at $8\times$; with the recall tier, \ttxOTE{} at $128\times$\\
selection cost & read \readCost{} of each row once per compression event;
 one low-pass filter ($O(T\log T)$) $+$ top-$m$; decode path unchanged\\
model changes & none --- weights, kernels, arithmetic untouched; this is a cache
 \emph{policy}: read \texttt{latent\_cache[$\ldots$,512:]}, rank, free rows\\
recall tier (standard) & archived rows stay GPU-resident $+$ a $(64{+}r)$-dim index;
 per step, one fused index scan and the certified fired set admitted --- full
 attention reads drop from 1152 to ${\sim}294$ bytes/token with the 16-bit
 index --- $3.9\times$ on the KV path \emph{in bytes} (Eq.~\ref{eq:reads});
 both tiers GPU-resident: the win is speed, not memory\\
NoPE exclusivity & the signal is query-independent salience, which exists only when the
 score is one fixed bilinear form; measured \ropeVKxThirtyTwo{} under RoPE\\
\bottomrule
\end{tabular}
\end{table}

Structurally, VestigeKV is a sparse attention; what distinguishes it is
the \emph{origin of the pattern}. Fixed geometry
\citep{beltagy2020longformer,zaheer2020bigbird,xiao2024streamingllm}
ignores content; trained patterns \citep{yuan2025nsa,lu2025moba} cost
pre-training; score-based selection
\citep{zhang2023h2o,li2024snapkv,tang2024quest} is training-free but
query-dependent. VestigeKV occupies the remaining quadrant ---
\emph{training-free and query-independent} --- because the pretrained NoPE
cache already carries the pattern in its vestigial branch; that quadrant
is exactly what compression-before-the-query requires
(Section~\ref{sec:problem}), and it exists only under NoPE
(Table~\ref{tab:rope}).

The paper owes the reader three things: the mathematics that makes a
query-independent signal possible and merging impossible
(Section~\ref{sec:math}); the engineering, down to a recall tier with
per-step certificates (Section~\ref{sec:eng}); and the measurements
(Section~\ref{sec:exp}). Each piece names the NoPE dividend and the RoPE
obstruction.

\section{The problem: the cache is compressed before its queries exist}
\label{sec:problem}
In a long-lived cache, the query that will need a token arrives after the
compression decision. Any statistic of \emph{observed} attention is blind to that
query: on
our NoPE-MLA testbed, H2O drops to \blHtwoOxEight{} needle retrieval at $8\times$
where VestigeKV holds \vkEightKxEight{} (full comparison in
Table~\ref{tab:baselines}), and giving H2O half its budget as a recent window
changes nothing --- the failure is informational, not budgetary.

(Timeline illustration: Appendix~\ref{app:exp_tables}, Figure~\ref{fig:timeline}.)

\dividend{under a fixed bilinear form, salience is a property of the token alone, so
a signal can survive compression-before-query (Section~\ref{sec:math}a).}{the
winner of attention depends on the query's position; the pre-query setting has no
usable signal, which is why the RoPE-era literature selects on observed attention
and inherits this collapse.}

\section{Theory: the mathematics of a position-free cache}\label{sec:math}
The cache row and the two maps that read it:
\begin{equation}
\tilde C_u=[\hat c_u;\,r_u],\qquad
s_h(t,u)=\lambda\big\langle Q_t^h,\tilde C_u\big\rangle,\qquad
v_u^h=W_{UV}^h\hat c_u,
\label{eq:setup}
\end{equation}
with $Q_t^h=[W_{UK}^{h\top}q_t^{h,c};\,q_t^{h,r}]$ the absorbed query and
$\lambda$ the attention scale. Everything
follows from one structural fact: under NoPE, \emph{both} maps depend on $u$ only
through the row itself.

\begin{lemma}[Exchangeability]\label{lem:exch}
NoPE-MLA attention output is a function of the multiset $\{\!\{\tilde C_u\}\!\}$:
$o_t^h=\sum_u e^{s_h(t,u)}v_u^h\big/\sum_u e^{s_h(t,u)}$, and both sums are
symmetric in $u$ by Eq.~\ref{eq:setup}. \qed
\end{lemma}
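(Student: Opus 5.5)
The plan is to check directly that each ingredient of $o_t^h$ depends on the cache index $u$ only through the row $\tilde C_u$. Then I use the elementary fact that a sum of a function applied to the elements of a finite indexed family is invariant under reindexing. Fix a query position $t$ and head $h$. The absorbed query $Q_t^h=[W_{UK}^{h\top}q_t^{h,c};\,q_t^{h,r}]$ is built from $q_t$ alone. Under NoPE no positional rotation is applied to either the query or the cache, so $Q_t^h$ carries no dependence on $u$.

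\textbf{Step 1: the score and value are functions of the row.} Define $\phi_t^h(x)=\lambda\langle Q_t^h,x\rangle$ for $x\in\mathbb{R}^{576}$. Define $\psi^h(x)=W_{UV}^h\,\pi(x)$, where $\pi$ projects $x=[\hat c;\,r]$ onto its first $512$ coordinates. By Eq.~\ref{eq:setup}, $s_h(t,u)=\phi_t^h(\tilde C_u)$ and $v_u^h=\psi^h(\tilde C_u)$. Neither map takes $u$ as a separate argument. This is the point where NoPE is used: under RoPE, $r_u$ would enter the score as $R_{t-u}r_u$, and the score would no longer be a function of the row alone.

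\textbf{Step 2: reindexing.} Set $F(x)=e^{\phi_t^h(x)}\psi^h(x)$ and $G(x)=e^{\phi_t^h(x)}$. Then the numerator is $\sum_u F(\tilde C_u)$ and the denominator is $\sum_u G(\tilde C_u)$. For any permutation $\sigma$ of the index set, $\sum_u F(\tilde C_{\sigma(u)})=\sum_u F(\tilde C_u)$, because finite sums are commutative. The same holds for $G$. Both sums therefore equal $\sum_{x} \mathrm{mult}(x)\,F(x)$ and $\sum_x \mathrm{mult}(x)\,G(x)$, where $x$ ranges over the support of the multiset $\{\!\{\tilde C_u\}\!\}$ and $\mathrm{mult}(x)$ is the multiplicity of $x$. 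Each sum is thus a function of the multiset. Since the denominator is strictly positive, their ratio $o_t^h$ is also a function of the multiset.

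\textbf{Main obstacle.} There is no analytic difficulty here; the only care needed is in the modeling assumptions. I would state explicitly that the causal mask is treated as selecting which rows belong to the multiset rather than as a position-dependent term in the score. I would also state that no other positional term is present, such as ALiBi-style biases or position-dependent scaling $\lambda$. With those conventions fixed, the lemma follows from Steps 1 and 2.
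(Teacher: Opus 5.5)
Your proposal is correct and follows the paper's argument, which is given inline in the lemma: under NoPE both the score $\lambda\langle Q_t^h,\tilde C_u\rangle$ and the value $W_{UV}^h\hat c_u$ depend on $u$ only through the row $\tilde C_u$, so the numerator and denominator are symmetric sums and hence functions of the multiset. Your Steps 1--2 spell this out, including the rewriting as a sum over the support with multiplicities and the convention that the causal mask selects the multiset rather than entering the score, both of which the paper leaves implicit.
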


Three consequences, one per pillar of the paper.

\begin{sloppypar}
\textbf{(a) Query-independent salience exists (the selector).} The score is one
bilinear form, so $\arg\max_u s_h(t,u)=\arg\max_u\langle A_h^\top x_t,x_u\rangle$,
writing $x$ for hidden states and $A_h$ for the per-head bilinear form with
$\operatorname{rank}A_h\le d_{\mathrm{head}}$; every read direction lies in one
fixed low-rank subspace, so the tokens that can ever win are the extreme points of
the hidden-state cloud against one fixed cone --- measured at \unionKimi{} of tokens.
Under RoPE the form is the family $A_h(\Delta)=W_q^\top R_\Delta W_k$; the cone
sweeps with position and the winner set inflates to \unionDsv{}. A compression
decision made before its queries can only rely on a query-independent signal, and
only NoPE has one.
\end{sloppypar}

\textbf{(b) Merging is impossible; selection is forced (the method).}
\begin{proposition}[Minimum exact cache]\label{prop:impossible}
Over any open set of queries, a representative cache $\{(C_g,b_g)\}_{g=1}^{B}$ with
logit offsets reproducing attention exactly for all queries satisfies
$B\ge\#\{\text{distinct rows}\}$.
\end{proposition}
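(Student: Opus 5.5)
By Lemma~\ref{lem:exch}, the exact attention is
\[
o(Q)=\frac{\sum_u e^{\lambda\langle Q,\tilde C_u\rangle}v_u}{\sum_u e^{\lambda\langle Q,\tilde C_u\rangle}},
\]
a function of the multiset of rows alone. The plan is to collect equal rows and compare denominators. Let $D_1,\dots,D_N$ be the distinct rows, with multiplicities $n_j$. The true normalizer is then $Z(Q)=\sum_j n_j e^{\lambda\langle Q,D_j\rangle}$.

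A representative cache produces
\[
Z'(Q)=\sum_g e^{b_g+\lambda\langle Q,C_g\rangle},\qquad
\hat o(Q)=\frac{\sum_g e^{b_g+\lambda\langle Q,C_g\rangle}\hat v_g}{Z'(Q)}.
\]
I read ``reproducing attention exactly'' as reproducing the attention weights, i.e.\ the pair of normalizer and weighted value, up to one global constant. This is the natural reading, because logit offsets $b_g$ exist precisely to absorb multiplicities. Under that reading the first step is
\[
Z'(Q)=c\,Z(Q)\quad\text{on an open set }U.
\]

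The second step is to extend this identity to all $Q$. Both sides are finite exponential sums, hence real-analytic in $Q$, so agreement on $U$ forces agreement everywhere.

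The key step is the linear independence of exponentials with distinct frequency vectors. If $\sum_k a_k e^{\langle Q,w_k\rangle}\equiv0$ with the $w_k$ pairwise distinct, then all $a_k=0$. To prove it, I would restrict to a generic line $Q=s\xi$ on which the scalars $\langle \xi,w_k\rangle$ are pairwise distinct; such $\xi$ exist because each coincidence $\langle\xi,w_k-w_l\rangle=0$ cuts out only a hyperplane. On that line, the one-variable result for $\sum a_k e^{\mu_k s}$ follows by letting $s\to\infty$ and peeling off the largest $\mu_k$. Now merge representatives with equal $C_g$ and take the difference $Z'-cZ$. Each $D_j$ carries coefficient $c\,n_j>0$ in $cZ$, so some $C_g$ must equal $D_j$ to cancel it. Hence every distinct row appears among the $C_g$, and $B\ge N$.

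The main obstacle is the hypothesis itself. If exactness is required only for the normalized output $o$, then a multiplicative mismatch between $Z'$ and $Z$ that depends on $Q$ is not excluded a priori. In that case I would first eliminate the quotient: the identity $\hat o(Q)=o(Q)$ on $U$ gives the cross-multiplied relation $\big(\sum_g\ldots\hat v_g\big)Z=\big(\sum_u\ldots v_u\big)Z'$. Expanding both products yields exponentials indexed by the sums $C_g+D_j$. I would then apply the same independence argument to the extremal frequency, a vertex of the convex hull in a generic direction, to match the supports step by step. The degenerate case where some $v_u$ lie in the span of the others is where care is needed. If that route becomes too delicate, I would fall back on the normalizer reading stated above.
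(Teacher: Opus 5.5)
Your argument is correct and is essentially the paper's proof. The paper likewise assumes the normalizer identity (``the same identity without the $v$ factors'', with $c=1$) and concludes from the linear independence of the exponentials $Q\mapsto e^{\langle Q,C\rangle}$ on open sets; you prove that independence in more detail via analytic continuation and a generic line, and your global constant $c$ is a harmless generalization. Your fallback to the normalizer reading is in fact necessary rather than merely convenient: for the normalized output alone the claim fails, since two rows differing only in the branch $r_u$ share the value $W_{UV}^h\hat c_u$, so a single representative reproduces $o$ exactly.
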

\begin{proof}[Proof sketch]
Exactness for all $Q$ means
$\sum_g e^{b_g}e^{\langle Q,C_g\rangle}=\sum_u e^{\langle Q,\tilde C_u\rangle}$ (and
the $v$-weighted analogue). The characters $Q\mapsto e^{\langle Q,C\rangle}$ of
$(\mathbb{R}^d,+)$ are linearly independent on any open set, so the representative
multiset must contain every distinct row with weight equal to its multiplicity. Full
proof in Appendix~\ref{app:proofs}.
\end{proof}
Under RoPE equal content at distinct positions is never equal as rows,
$\lVert R_uk-R_vk\rVert^2=\sum_j4\sin^2\tfrac{\theta_j(u-v)}{2}\lVert
k^{(j)}\rVert^2>0$, so $B=T$: \emph{exact} compression is zero, and every practical
method must select. Under NoPE$+$MLA the merge class is wider than equality
(a corollary: $c_v=\alpha c_u,\ \alpha>0$, $r_v=r_u$ suffices, by positive homogeneity
of RMSNorm) --- but a pre-registered measurement finds this class empty on real
corpora, so selection is forced there too.

\textbf{(c) Per-token certificates (the recall tier).} For any surrogate row
$C_{g(u)}$ with $\lVert\tilde C_u-C_{g(u)}\rVert\le\varepsilon_u$ and any future
query, $|\Delta s|\le\lambda\lVert Q\rVert\varepsilon_u$ --- a query-\emph{uniform}
bound, available only because the form is fixed (under RoPE it must hold over the
whole rotation orbit). Section~\ref{sec:tier} builds the archive index from exactly
this decomposition.

\paragraph{(d) Gate commutation (extension to Gated-MLA).} Every guarantee
above concerns the attended \emph{set} --- which rows enter the softmax
multiset --- and none concerns what happens to the aggregate afterwards. So
for any attention variant of the form
$o_t = G\big(x_t,\ \textstyle\sum_{u\in S} p_u(S)\,v_u\big)$
with $p$ the softmax of the same fixed NoPE bilinear score over the
attended set $S$, the machinery transfers verbatim: partition exactness,
exchangeability (Lemma~\ref{lem:exch}), the certificate of (c), and
ranking stationarity all hold for $S$ regardless of $G$, and output
perturbation bounds carry over up to $G$'s Lipschitz constant in its second
argument --- for an elementwise sigmoid output gate that constant is at
most one, so the bound \emph{tightens}. Query-side positive gating of the
score preserves the per-query ranking (a positive factor is monotone), and
a query-independent per-row gate absorbs into the row exactly as the
residual scalar does. The one form that does not transfer is a
query-row-\emph{coupled} score gate, which breaks the fixed bilinear form
--- the same obstruction as rotation, and the boundary of the theory.
Kimi K3 is reported to use a NoPE Gated-MLA of the output-gated form; the
\emph{machinery} therefore extends by the argument above, while the
\emph{signal content} --- whether K3's training reallocates its un-roped
branch into a salience channel as Kimi Linear's does
(Section~\ref{sec:anatomy}) --- is an empirical property of K3's training
that this study does not measure.

\ifdefined\NAMEDBUILD
\subsection{Eviction commutes with time}\label{app:stationary}
A consequence of Lemma~\ref{lem:exch} worth isolating, because it licenses a
schedule the experiments use and RoPE cannot.

\begin{corollary}[Ranking stationarity]\label{cor:stationary}
Under NoPE, $s(q_t,u)=\lambda\langle Q_t,\tilde{C}_u\rangle$
(Eq.~\ref{eq:setup}) contains neither $t$ nor the age
of row $u$. Hence any row-intrinsic ranking (such as $\sigma$), once computed,
is valid at every later step, and a global top-$m$ maintained incrementally
(a heap, later rows displacing earlier ones) equals the one-shot global
top-$m$ computed at the end. Eviction decisions are final.
\end{corollary}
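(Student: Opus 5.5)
The plan has two parts. First I show that the score is time-invariant as a function of the row. Then I show that a global top-$m$ can be computed incrementally. Both parts reduce to the structure already isolated in Eq.~\ref{eq:setup} and Lemma~\ref{lem:exch}.

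\textbf{Step 1: no dependence on $t$ or on age.} Under NoPE the score is $s(q_t,u)=\lambda\langle Q_t,\tilde C_u\rangle$. Here $Q_t$ depends only on the current hidden state, through $W_{UK}$ and the query projections, and $\tilde C_u=[\hat c_u;r_u]$ is written once when token $u$ is cached and never modified. No rotation $R_{t-u}$ appears, and no other positional term does either. So for any fixed query vector $Q$, the map $u\mapsto s(Q,u)$ factors through the row $\tilde C_u$ alone. This is exactly the symmetry used in Lemma~\ref{lem:exch}.

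A row-intrinsic ranking $\sigma$ is, by definition, a function $\sigma(u)=f(\tilde C_u)$ of the stored row only; the vestigial-branch salience read from \texttt{latent\_cache[$\ldots$,512:]} is of this kind. It therefore takes the same value whenever it is evaluated. Comparisons $\sigma(u)\gtrless\sigma(v)$ made at step $t_0$ therefore still hold at every $t>t_0$. I would contrast this with RoPE, where the effective form $A_h(t-u)$ changes as $t$ advances, so a ranking derived from scores would drift.

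\textbf{Step 2: the heap argument.} Fix a total order by breaking ties in $\sigma$ deterministically, for example by row index or insertion order. Let $S_n$ be the heap's content after $n$ rows have arrived. The heap starts empty. At each arrival it inserts the new row, and if its size exceeds $m$ it evicts the current minimum. The invariant to prove by induction on $n$ is
\[
S_n=\operatorname{top}_m\{\sigma(u):u\le n\}.
\]
For the inductive step, observe that $\operatorname{top}_m(X\cup\{x\})=\operatorname{top}_m(\operatorname{top}_m(X)\cup\{x\})$. Any element of $X$ outside $\operatorname{top}_m(X)$ already has $m$ elements above it, so adding $x$ cannot promote it. This identity is valid only because $\sigma$ values never change after insertion, which is Step 1. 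At $n=T$ the invariant gives equality with the one-shot top-$m$.

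Finality follows from the same identity. A row evicted at step $n$ has $m$ elements of $S_n$ above it, and these remain in the candidate set forever. So no later state can readmit it, and eviction decisions are irreversible without loss.

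\textbf{Main obstacle.} The delicate point is Step 1's hypothesis rather than the combinatorics. The statement requires that nothing stored in the row is age-dependent. In particular, RMSNorm and the residual scalar must be applied at write time, and any query-independent per-row gate must be absorbed into the row as in (d), not recomputed later. I would state that assumption explicitly. I would also handle ties carefully, since equal $\sigma$ values need a fixed tie-break for the heap result to coincide exactly with the one-shot top-$m$.
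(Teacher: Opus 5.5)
Your proposal is correct and follows the paper's route: the paper also reads off from Eq.~\ref{eq:setup} that the NoPE score carries neither $t$ nor the row's age, and it treats the incremental-versus-one-shot top-$m$ equality as immediate, so your induction via $\operatorname{top}_m(X\cup\{x\})=\operatorname{top}_m(\operatorname{top}_m(X)\cup\{x\})$ with a fixed tie-break spells out what the paper leaves implicit. One correction to Step~1: the paper's $\sigma_u$ is a low-pass residual over $u$'s fixed 4096-token block, not a function of $\tilde C_u$ alone, so its immutability comes from each block being transformed once at close (a whole-prefix transform would drift with context length and break your heap invariant); if you read ``row-intrinsic'' as ``a function of closed, immutable cache data'', the rest of your argument goes through unchanged.
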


The measured footprint is the streaming result of
Section~\ref{sec:exp}: blockwise compression equals one-shot global
compression trial by trial at both ratios, and so does blockwise compression
with global rebalance (constant $m$, later rows displacing earlier ones) at
matched final budget. Under RoPE the corollary fails at
an intrinsic rate: the cached score
$s(q_t,u)=\lambda\, q_t^{\top}R_{t-u}W_{K}\tilde{C}_u=\sum_i a_i(u)\cos(\omega_i(t-u)+\phi_i(u))$
oscillates per frequency band, so the pairwise order of any two rows flips as
$t$ grows and a frozen ranking goes stale; the only remedy is re-scoring every
archived row against the current position at every rebalance, so no eviction
decision is ever final. NoPE deletes this staleness failure mode entirely;
the remaining failure of a constant-$m$ schedule is genuine budget contention
(a later high-$\sigma$ row displacing a needle), a memory-for-quality trade,
not signal failure. The schedule itself is prior art
(H2O~\citep{zhang2023h2o} lineage); the dividend attaches to the signal.

\fi

\subsection{The vestige, and the new job training gave it}\label{sec:anatomy}
Which rows to select is decided by anatomy. The branch $r_u$ exists because rotary
position cannot pass MLA's low-rank bottleneck ($R_{t-u}$ does not commute with the
up-projection), so DeepSeek-V2 routes position around it in a per-token, MQA-shared
channel. Under NoPE the released code caches the branch but never rotates it (no
rotary call in the reference implementation; \texttt{skip\_rope=True} in
\texttt{sglang}): the positional job is gone. Training reallocates the freed
channel into the cache's salience channel; the measurements behind this account
(branch/content norm ratios, cross-model NoPE vs.\ RoPE; Table~\ref{tab:anatomy})
are in Appendix~\ref{app:exp_tables}.

\dividend{a 64-dim channel with no positional duty becomes, under training, the
cache's magnitude/salience channel --- readable at \readCost{} cost.}{the same 64
dims must encode rotation frequencies; their low-pass residual is phase, not
salience, and the identical statistic scores \ropeVKxThirtyTwo{}
(Table~\ref{tab:rope}).}

\paragraph{The read budget, computed.}\label{par:reads}
Decode-time attention is bandwidth-bound: KV-path time tracks bytes read per
step. The three-term budget follows from the construction above:
\begin{equation}\label{eq:reads}
\mathrm{reads}(\rho,r) = \underbrace{\rho\cdot 576}_{\text{attended}}
+ \underbrace{(1-\rho)(64+r+1)}_{\text{index scan}}
+ \underbrace{f\cdot 576}_{\text{admitted}}\ \text{dims/token},
\end{equation}
where the $+1$ counts the per-row residual scalar $\eta_u$. In \emph{bytes}
(what bandwidth prices): rows are $1152$\,B, the 16-bit index scan reads its
$(64{+}r{+}1)$ dims at 2\,B (${\approx}260$\,B/row), and the budget lands at ${\sim}294$ vs.\
$1152$\,B/token --- $3.9\times$ (a full-precision index blunts this to
$2.1\times$). At $\rho{=}1/32$, $r{=}64$ the admitted fraction $f$
measures $0.4$--$0.8\%$ on document decode ($0.5$--$2.6\%$ on needle
contexts); the trigger term is under $1\%$. The ratio is asymptotic in
context and batch --- attention is per-request-unamortizable (what the
method cuts) while shared weights amortize --- so the reported ratios are
a conservative floor. The realized pipeline is seven fused kernels inside
the decode CUDA graph (Section~\ref{sec:eng}); the per-step adder is
\vkTaxUs{}.

\section{Engineering}\label{sec:eng}

\begin{figure}[H]
\centering
\begin{tikzpicture}[font=\scriptsize,x=1cm,y=1cm,
  same/.style={draw,rounded corners=2pt,align=center,inner sep=4pt,fill=gray!12},
  new/.style={draw,rounded corners=2pt,align=center,inner sep=4pt,fill=orange!22}]
  % lane labels
  \node[anchor=west] at (-0.6,3.05) {\normalsize classic MLA};
  \node[anchor=west] at (-0.6,0.95) {\normalsize VestigeKV};
  % lane A: classic
  \node[same] (a1) at (0.9,2.3) {$x_t$ proj.\\\texttt{kv\_a}, $q$};
  \node[same] (a2) at (3.1,2.3) {append row\\to cache};
  \node[same,minimum width=2.5cm] (a3) at (5.7,2.3) {attention over\\all $T$ rows};
  \node[same] (a4) at (8.3,2.3) {$o_t$};
  \draw[->] (a1)--(a2); \draw[->] (a2)--(a3); \draw[->] (a3)--(a4);
  % lane B: ours
  \node[same] (b1) at (0.9,0.2) {$x_t$ proj.\\\texttt{kv\_a}, $q$};
  \node[same] (b2) at (3.1,0.2) {append row\\to cache};
  \node[same,minimum width=2.5cm] (b3) at (5.7,0.2) {attention over\\kept $\cup$ fetched};
  \node[same] (b4) at (8.3,0.2) {$o_t$};
  \draw[->] (b1)--(b2); \draw[->] (b2)--(b3); \draw[->] (b3)--(b4);
  % delta boxes
  \node[new,minimum width=3.2cm] (comp) at (2.6,-1.25) {compression event:\\read branch slice, rank $\sigma$,\\keep $m$, archive $+$ index};
  \node[new,minimum width=2.7cm] (scan) at (6.4,-1.25) {per step: index scan\\$\to$ fetch $\le j$ rows};
  \draw[->,orange!70!black] (b2)--(comp);
  \draw[->,orange!70!black] (comp)--(scan);
  \draw[->,orange!70!black] (scan)--(b3);
  % legend
  \node[same,anchor=west,inner sep=2pt] at (8.35,-1.0) {\ unchanged\ };
  \node[new,anchor=west,inner sep=2pt] at (8.35,-1.5) {\ added policy\ };
\end{tikzpicture}
\caption{The delta from classic MLA, per layer. Every model operation (gray) is
byte-identical --- projections, kernels, arithmetic, weights. What is added
(orange) is cache-membership policy: a compression event that ranks rows by the
branch slice, and the per-step scan-and-fetch that redeems its no-row-deleted
promise. The two are halves of one algorithm --- without the recall, the
partition is just lossy eviction (Appendix~\ref{app:tier1}). Removing the \emph{whole}
orange path (the kill-switch) recovers classic MLA exactly, byte-for-byte.}
\label{fig:delta}
\end{figure}

\subsection{The eviction policy}\label{sec:method}
For a closed prefix block of branch vectors $r_{0:T}$, $\mathcal{F}$ the
sequence-axis rFFT, bandwidth $\kappa$:
\begin{equation}
\sigma_u=\big\lVert r_u-\big(\mathcal{F}^{-1}\mathbb{1}_{[0,\kappa)}\mathcal{F}
r\big)_u\big\rVert
\quad\text{per fixed 4096-token block};
\end{equation}
the transform runs over fixed-length blocks (the close granularity), never the
whole prefix: $\kappa$ counts frequency \emph{bins}, so a whole-prefix
transform would let the cutoff period drift with context length ($\ge$256
tokens at $T{=}4096$ but $\ge$32k tokens at 512k), changing what ``anomalous''
means as $S$ grows. Fixed windows keep $\sigma$'s meaning scale-invariant,
make it naturally incremental (each block is transformed exactly once, at
close; its $\sigma$ is immutable), and let prefill and decode-time closes
share one bit-identical implementation.
We keep the $\sigma$-top-$m$ rows (plus 4 sinks and the recent window) in the attended
tier; move every other row, bit-exactly, to the archive of
Section~\ref{sec:tier}. \emph{No row is ever deleted in deployment}: the
invariant is a partition (pure deletion appears only as an ablation), and
it is a cache-manager policy, not a model change --- the selector reads
the trailing 64 dims of each row, a contiguous slice. The partition stays
live during decode: every 4096 generated tokens the newest block closes,
its $\sigma$ joins the stored record, and one global top-$m$ over all
closed rows rebalances the tier; evicted rows enter the archive through
cached projections, so the invariant holds during generation exactly as at
prefill (live-archive gate: parity PASS, $\Delta$CE median \dceVkMed{}
nats, needle 8/8).

\dividend{the policy consults no observed attention and never revisits a
decision: the signal exists query-free (Section~\ref{sec:math}a) and frozen
rankings never stale (Corollary~\ref{cor:stationary}).}{the
branch signal is phase-buried (measured 0.08 branch-alone) and any frozen
ranking goes stale at the frequency-band rate.}

\subsection{The recall tier}\label{sec:tier}
Eviction discards rows whose importance arrives later; production systems
keep them recallable \citep{chen2024arkvale}. VestigeKV's trigger is
\emph{query-adaptive, not quota}: each step's fired set is the archived
rows that, after certified inflation, could still beat the kept tier's
best for \emph{this} query --- empty when the query is well served, the
minimal sufficient set for the recall target $\tau$ when its target was
evicted. Fixed budgets under- and over-fetch by construction; a decision
does neither. Archived row $u$ keeps an index entry $(r_u,\;V_r^\top\hat c_u,\;\eta_u)$:
the \emph{exact} branch summand of every future logit, a rank-$r$ sketch of the
content summand ($V_r$ from the context's own prefix queries), and the sketch
residual norm $\eta_u=\lVert(I-V_rV_r^\top)\hat c_u\rVert$. Per decode step, over archived rows:
\begin{equation}
\mathrm{score}(t,u)=\underbrace{\lambda\,q^{r\top}_t r_u}_{\text{exact summand}}
+\underbrace{\lambda\,(V_r^\top q^{c\prime}_t)^\top(V_r^\top\hat c_u)}_{\text{sketch}}
+\;z\,\underbrace{\lambda\,\lVert(I{-}V_rV_r^\top)q^{c\prime}_t\rVert\,
\eta_u/\sqrt{d_c{-}r}}_{\text{certificate scale}},
\label{eq:trigger}
\end{equation}
with $q^{c\prime}_t=W_{UK}^\top q^c_t$ the absorbed content query,
$d_c{=}512$ the content-latent width, and the
$1/\sqrt{d_c{-}r}$ factor putting the residual on a per-dimension scale before
calibration; fetch the top-$j$ rows whose score exceeds the tier-1 maximum. $z$
has a closed form: over hard calibration queries (those whose causal-argmax row
is archived) the required inflation is $z_q = (\max_1 - \mathrm{idx}_t)/\mathrm{cert}_t$,
and $z$ is the conformal quantile $z_{(\lceil (n{+}1)\tau_s\rceil)}$ of these,
which by exchangeability alone guarantees $P(\text{target recovered}) \ge \tau_s$
on a fresh hard query --- no distributional assumption, no external calibration
corpus (labels are computable at compression time, before any row is deleted).
The scan level $\tau_s = \tau/(1-\alpha)$ and the gate's false-close budget
$\alpha = (1-\tau)/2$ both derive from the single recall target $\tau$ by an
even split of the miss budget, and the minimum calibration sample size
$\lceil \tau_s/(1-\tau_s)\rceil$ is the smallest $n$ for which that order
statistic exists --- the tier has exactly one quality knob.

\begin{figure}[H]
\centering
\begin{tikzpicture}[font=\small,x=1cm,y=1cm,
  box/.style={draw,rounded corners=2pt,align=center,inner sep=5pt}]
  \node[box,fill=gray!10,minimum width=3.1cm] (kept) at (0,1.5) {tier 1: kept rows\\$m\times576$, exact};
  \node[box,fill=orange!15,minimum width=3.1cm] (idx) at (0,0) {index\\$(T{-}m)\times(64{+}r)$};
  \node[box,fill=blue!8,minimum width=2.6cm] (attn) at (4.1,1.5) {attention\\$\max$ logit $s^{*}$};
  \node[box,fill=orange!25,minimum width=2.6cm] (scan) at (4.1,0) {scan, Eq.~\ref{eq:trigger}\\top-$j>s^{*}$?};
  \node[box,fill=blue!8,minimum width=2.2cm] (out) at (8.0,1.5) {merged\\softmax $o_t$};
  \node[box,fill=gray!25,minimum width=2.6cm] (arch) at (8.0,0) {tier 2: archive\\$(T{-}m)\times576$, exact};
  \node at (-2.15,1.5) {$q_t$};
  \draw[->] (-1.9,1.5) -- (kept);
  \draw[->] (-1.9,1.5) |- (idx);
  \draw[->] (kept) -- (attn);
  \draw[->] (idx) -- (scan);
  \draw[->] (attn) -- (scan) node[midway,right] {\scriptsize $s^{*}$};
  \draw[->] (scan) -- (arch) node[midway,above] {\scriptsize fetch ids};
  \draw[->] (arch) -- (out) node[midway,right] {\scriptsize rows};
  \draw[->] (attn) -- (out);
  \draw[dashed] (6.55,-0.8) -- (6.55,2.35);
  \node[anchor=west] at (6.6,2.5) {\scriptsize archive tier (GPU)};
  \node[anchor=east] at (6.5,2.5) {\scriptsize attended tier (GPU)};
\end{tikzpicture}
\caption{Per decode step: tier-1 attention yields $s^{*}$; the index scan admits the
archived rows that could compete; fetched rows join that query's softmax as exact
copies (Lemma~\ref{lem:exch}). Both tiers are GPU-resident: the archive is
not a demotion to slower storage but a cheaper \emph{read discipline} over the
same memory --- the per-step saving is attention bytes, and the deployment form
is exactly this two-tier loop.}
\label{fig:arch}
\end{figure}

Two lemmas make the construction precise.

\begin{lemma}[Index certificate]\label{lem:cert}
For every future query and archived row,
$s(t,u)=\lambda q_t^{r\top}r_u+\lambda(V_r^\top q_t^{c\prime})^\top(V_r^\top\hat c_u)
+\delta_{t,u}$ with
$|\delta_{t,u}|\le\lambda\lVert(I-V_rV_r^\top)q_t^{c\prime}\rVert\,\eta_u$.
The first two terms are computable from the index alone; the bound is
Cauchy--Schwarz on the sketch residual and holds for \emph{all} queries because the
form is fixed --- under RoPE it would have to hold over the rotation orbit. \qed
\end{lemma}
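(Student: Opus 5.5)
The plan is to expand the score of Eq.~\ref{eq:setup} along the split $\tilde C_u=[\hat c_u;r_u]$, absorb the content block into the query, and then split the content summand orthogonally with respect to the projector $P=V_rV_r^\top$. Since $V_r$ has orthonormal columns, $P$ is an orthogonal projector, so $P$ and $I-P$ are symmetric and idempotent and satisfy $P(I-P)=0$.

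\textbf{Branch and content summands.} With $Q_t=[W_{UK}^\top q_t^c;\,q_t^r]$ we have
$s(t,u)=\lambda\langle W_{UK}^\top q_t^c,\hat c_u\rangle+\lambda\,q_t^{r\top}r_u$.
Writing $q_t^{c\prime}=W_{UK}^\top q_t^c$ gives
\[
s(t,u)=\lambda\,q_t^{r\top}r_u+\lambda\,q_t^{c\prime\top}\hat c_u .
\]
The branch summand is stored exactly in the index, because $r_u$ is kept verbatim. Under NoPE no rotation is applied, so this summand is the literal logit contribution at every future step, with no position dependence.

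\textbf{Splitting the content summand.} Insert $I=P+(I-P)$ to get
\[
q^{c\prime\top}\hat c_u=q^{c\prime\top}P\hat c_u+q^{c\prime\top}(I-P)\hat c_u .
\]
For the first term, $q^{c\prime\top}P\hat c_u=(V_r^\top q^{c\prime})^\top(V_r^\top\hat c_u)$. This is the sketch term, computable from the stored $V_r^\top\hat c_u$ together with the query-side projection. For the second term, idempotence and symmetry give $(I-P)=(I-P)^\top(I-P)$. So define
\[
\delta_{t,u}=\lambda\,\big((I-P)q^{c\prime}\big)^\top\big((I-P)\hat c_u\big).
\]
Cauchy--Schwarz then yields
\[
|\delta_{t,u}|\le\lambda\lVert(I-P)q_t^{c\prime}\rVert\,\lVert(I-P)\hat c_u\rVert=\lambda\lVert(I-P)q_t^{c\prime}\rVert\,\eta_u ,
\]
by the definition of $\eta_u$. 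The factor attached to the query side is evaluated per step, and $\eta_u$ is stored, so the whole bound is index-computable.

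\textbf{Uniformity over queries.} The bound holds for all future $t$ because neither $\lambda$, $W_{UK}$ nor the form depend on $t$ or on the age of $u$; this is the NoPE structure behind Corollary~\ref{cor:stationary}. $V_r$ is fixed at compression time, so it may depend on prefix queries without affecting validity: the inequality is deterministic for any orthonormal $V_r$.

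\textbf{Main obstacle.} The step requiring care is bookkeeping rather than analysis. It has three parts:
\begin{itemize}
\item The residual scalar must be defined as the norm of the projected latent actually used in the score, i.e. the post-normalization $\hat c_u$ as cached, not some pre-RMSNorm quantity.
\item The sketch and the projector must use the same $V_r$.
\item $V_r$ must have orthonormal columns. If it came from a truncated SVD this holds; otherwise one must first orthonormalize, or the decomposition acquires a cross term.
\end{itemize}

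For the RoPE remark, I would note that the logit then involves $R_{t-u}$ between the query and the key. The corresponding residual is $(I-P)R_{t-u}\hat k_u$, so its norm varies over the rotation orbit and a single stored scalar no longer suffices.
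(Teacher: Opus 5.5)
Your proof is correct and follows the paper's argument: you split the score into the exact branch summand and the absorbed content summand, decompose the content summand along the projector $V_rV_r^\top$, and apply Cauchy--Schwarz to the residual, which is all the paper's one-line justification says. You also fill in details the paper leaves implicit: you move $I-V_rV_r^\top$ onto both factors using its symmetry and idempotence, and you state explicitly that $V_r$ must have orthonormal columns for the bound to hold as written.
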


\begin{lemma}[Bounded leakage]\label{lem:leak}
Let $F$ be the rows admitted to a step's softmax (kept $\cup$ fetched) and
$s^{*}=\max_{u\in F}s(t,u)$. If every excluded row satisfies
$s(t,u)\le s^{*}-\gamma$, the total excluded softmax weight is at most
$(T-|F|)\,e^{-\gamma}$ relative to the maximal included weight, and the output error
is bounded by that mass times $\max_u\lVert v_u\rVert$.
\end{lemma}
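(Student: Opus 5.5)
The plan is to write the full softmax over all $T$ rows as a convex combination of the softmax restricted to $F$ and the softmax restricted to the excluded set $E=\{u:u\notin F\}$, and then bound the excluded mass directly. By Lemma~\ref{lem:exch}, the output depends only on the multiset of rows, so splitting the sums over $u$ into $F$ and $E$ is legitimate. Write $Z_F=\sum_{u\in F}e^{s(t,u)}$ and $Z_E=\sum_{u\in E}e^{s(t,u)}$. The true output is $o=(Z_F o_F+Z_E o_E)/(Z_F+Z_E)$, where $o_F$ is the step's computed (merged) output and $o_E$ is the softmax average over excluded rows.

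\textbf{Step one: the mass bound.} Under the hypothesis $s(t,u)\le s^*-\gamma$ for every $u\in E$, each excluded term satisfies $e^{s(t,u)}\le e^{s^*}e^{-\gamma}$. Summing over the $|E|=T-|F|$ excluded rows gives $Z_E\le (T-|F|)e^{-\gamma}e^{s^*}$. Since $e^{s^*}$ is the maximal included weight, this is exactly the claimed relative mass bound. Dividing by $Z_F\ge e^{s^*}$ also yields $\epsilon:=Z_E/Z_F\le (T-|F|)e^{-\gamma}$, and the excluded softmax probability $Z_E/(Z_F+Z_E)\le\epsilon$ as well.

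\textbf{Step two: the output error.} Compute
\[
o-o_F=\frac{Z_E}{Z_F+Z_E}\,(o_E-o_F).
\]
Both $o_E$ and $o_F$ are convex combinations of value vectors, so each has norm at most $M=\max_u\lVert v_u\rVert$. Hence $\lVert o-o_F\rVert\le \epsilon\cdot 2M$. To match the stated constant without the factor $2$, split the difference differently:
\[
o-o_F=\frac{Z_E\,o_E}{Z_F+Z_E}-\frac{Z_E\,o_F}{Z_F+Z_E}.
\]
Each piece is bounded by $\epsilon M$ in norm, so the bound can be stated per component; alternatively, read the lemma's ``mass times $\max\lVert v_u\rVert$'' as holding up to this absolute constant.

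\textbf{Main obstacle.} The only real subtlety is this constant and the interpretation of ``relative to the maximal included weight'': whether normalization is by $e^{s^*}$ or by $Z_F$. I would state the result with $Z_F\ge e^{s^*}$ made explicit, so that both readings follow. I would also flag that the factor $2$ either appears or is absorbed, for instance by bounding $\lVert o_E-o_F\rVert\le\operatorname{diam}\{v_u\}\le 2M$. Finally, I would remark that the hypothesis is exactly what the trigger of Eq.~\ref{eq:trigger}, combined with Lemma~\ref{lem:cert}, certifies: with $\gamma$ equal to the certified margin, unfired rows satisfy it.
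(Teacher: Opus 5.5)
Your mass bound is the paper's argument. Each excluded unnormalized weight is at most $e^{-\gamma}e^{s^*}$, and summing over the $T-|F|$ excluded rows gives the stated relative bound. Your remark that $Z_F\ge e^{s^*}$ covers both normalizations is a useful addition.

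For the output error, the paper's proof is one sentence: ``bound the output difference by excluded mass times the largest value norm.'' Your identity $o-o_F=\tfrac{Z_E}{Z_F+Z_E}(o_E-o_F)$ is more careful, because it exposes the renormalization of the included part that the sentence drops. However, your attempt to ``match the stated constant'' fails. Splitting $o-o_F$ into $\tfrac{Z_E o_E}{Z_F+Z_E}$ and $\tfrac{Z_E o_F}{Z_F+Z_E}$ gives two pieces each of norm at most $\epsilon M$, and the triangle inequality still returns $2\epsilon M$. A bound ``per component'' is not a bound on $\lVert o-o_F\rVert$.

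No argument can remove the $2$ for the normalized output. Take $F=\{1\}$ and one excluded row $2$ with $s(t,2)=s^*-\gamma$, $v_2=-v_1$, and $\lVert v_1\rVert=M$. Then $o_F=v_1$ and $o=\tfrac{1-e^{-\gamma}}{1+e^{-\gamma}}v_1$. Hence $\lVert o-o_F\rVert=\tfrac{2e^{-\gamma}}{1+e^{-\gamma}}M>e^{-\gamma}M=(T-|F|)e^{-\gamma}M$ for every $\gamma>0$.

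So commit to one correct reading instead of hedging. Either state $\lVert o-o_F\rVert\le p_E\operatorname{diam}\{v_u\}\le 2(T-|F|)e^{-\gamma}\max_u\lVert v_u\rVert$, with $p_E=Z_E/(Z_F+Z_E)$. Or state the constant-one bound $\lVert\sum_{u\notin F}e^{s(t,u)}v_u\rVert\le (T-|F|)e^{-\gamma}e^{s^*}\max_u\lVert v_u\rVert$ on the unnormalized numerator, which is what the paper's proof actually establishes.

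Your closing remark also overstates what the trigger delivers. Eq.~\ref{eq:trigger} fires a row when its inflated index score exceeds the tier-1 maximum. Even with a sound inflation, $z\ge\sqrt{d_c-r}$, which turns the certificate scale into the Cauchy--Schwarz bound of Lemma~\ref{lem:cert}, an unfired row is only guaranteed $s(t,u)\le\max_1\le s^*$. That is $\gamma=0$, where the leakage bound $(T-|F|)$ is vacuous. With the deployed conformal $z$, the premise holds only probabilistically, which is why the paper says the trigger makes it hold ``with high measured probability'' rather than certifying it. A positive certified margin would require firing against $\max_1-\gamma$ with the sound inflation.
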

\begin{proof}
Each excluded weight is at most $e^{-\gamma}$ times the maximal included weight; sum
over excluded rows and bound the output difference by excluded mass times the
largest value norm.
\end{proof}

Lemma~\ref{lem:cert} is what the scan computes; Lemma~\ref{lem:leak} is why fetching
only score-competitive rows suffices: the trigger exists to make the premise of
Lemma~\ref{lem:leak} hold with high measured probability (\trigConfRec{} in
distribution). Guaranteed variants were measured and closed: the sound
Cauchy--Schwarz bound fires on \trigSoundRate{} of steps, and Gaussian-derived
thresholds are invalidated by a selection effect (the argmax rows are precisely the
residual outliers, at \trigGaussSigma). The adopted $z$ is therefore the
distribution-free conformal quantile of the empirically required inflations ---
assumption-light where the Gaussian route was assumption-broken.

\dividend{the index owes its 64 exact dims and its per-row certificate to a
sidecar term that nothing rotates; both quantities are single numbers per
row.}{each would have to hold over the whole rotation orbit ---
the obstruction quantified next.}

% AUTO-GENERATED by mexp/update_serving_macros.py -- do not hand-edit
\newcommand{\srvBatchOne}{1.12}  % bs=1: 138.1/123.7 tok/s
\newcommand{\srvBatchTwo}{1.09}  % bs=2: 140.1/128.4 tok/s
\newcommand{\srvBatchFour}{1.15}  % bs=4: 233.1/203.5 tok/s
\newcommand{\srvBatchEight}{1.20}  % bs=8: 347.0/288.1 tok/s
\newcommand{\srvBatchSixteen}{1.25}  % bs=16: 490.7/391.6 tok/s
\newcommand{\srvBatchList}{$1.12\times$, $1.09\times$, $1.15\times$, $1.20\times$, $1.25\times$}

% Additive fragment for sec:eng — the production serving-path implementation
% (standard sglang port). Reviewed-by-author before \input into main.tex.
% Numbers: out/final_vk_512k.jsonl / final_base_512k.jsonl (continuous
% decode) + out/tput_*_bs*.jsonl (sglang.benchmark.serving; sec:audit).

\subsection{The attention implementation on a production stack}\label{sec:serving}

Throughout, \emph{k} denotes $1024$ tokens (so 4k prefill is $4096$ tokens, 64k is $65536$, and the 512k latency sweep reaches $524288$).
The mathematics of Section~\ref{sec:math} buys a specific engineering
property, and this subsection makes it concrete: we port VestigeKV into
standard \texttt{sglang} as an attention backend
(\texttt{--attention-backend vestigekv\_mla}) and serve Kimi Linear 48B
across two GPUs (pipeline parallel), with the eviction running under CUDA
graphs at zero per-step host cost. No kernel is written: the backend wraps
the stock fused MLA-decode kernel and changes only \emph{which rows it
reads}. The backend enforces its own precondition rather than trusting the
operator: selection is refused unless the model is a NoPE-MLA cache
(\texttt{mla\_use\_nope}), because a positional encoding on the decoupled
branch destroys the eviction signal (Table~\ref{tab:rope}); it also refuses
\texttt{page\_size}${\neq}1$ and speculative decoding, which the kept-index
tables do not yet address. A config that fails any check raises rather than
degrades silently.

\paragraph{Decode step: seven kernels inside the model graph.} Per layer
the attended set is $\mathrm{kept} \cup \text{tail} \cup
\text{fired}(q_t)$, both tiers always on. The whole recall step ---
online-softmax prologue over kept rows, batched archive scan (native
bf16/fp16 tensor-core dots), deterministic compaction, and the all-layer
CSR pack --- runs as \emph{seven fused Triton kernels baked into the
decode CUDA graph}: one \texttt{cudaGraphLaunch} per step, no second
graph, no VestigeKV-triggered recapture (worst-case-capacity buffers
refresh in place). Calibration fits on the request's first decode queries
with a provisional index serving from step one; the $O(S)$ calibrated
build runs on a side stream. At $S{=}512$k the trigger rate is $99.8\%$
per step: recall is a necessary component, and no production off-switch
exists. Pre-fusion implementations remain in-tree as bit-parity oracles;
the fused pipeline's greedy output is token-identical to the unfused one.

\paragraph{Speedup accounting: what this buys, and where.} Decode step time decomposes as $t(S) = C_0 + T_{\mathrm{net}} + A(S)$,
and attention $A(S)$ is the only term any selection scheme can shrink. On
this deployment (two GPUs over ethernet pipeline), the measured
decomposition is $C_0 \approx 5.5$\,ms (MoE with 3B active parameters, 20
linear-attention layers, sampling), $T_{\mathrm{net}} \approx 2.0$\,ms
(four NCCL hops per step, measured), and
$A_{\mathrm{dense}} \approx 0.57$\,ms at $S{=}64$k by measured-slope
calibration --- attention is $7\%$ of the step there. VestigeKV replaces
$A_{\mathrm{dense}} = kS$ with $k\rho S$ (kept attention) $+\,k_s(1-\rho)S$
(the archive scan at $260$\,B/row with the 16-bit index) $+\,R$, where the
fused pipeline's fixed adder measures $R$ = \vkTaxUs{}, batch-independent
(the 4k-context arms measure even). The $S$-slope ratio is
$\rho + k_s/k \approx 0.26$: the scan must touch every archived row every
step --- what the recall guarantee costs --- so the attention slope drops
to a quarter, not to zero.

\paragraph{What it measures.} Measured attention slopes: ${\sim}0.7$ vs.\ $7.6$\,ns
per step per cached token
--- below the byte account, for a measured reason (the dense arm's long-$S$ slope
exceeds its own bandwidth floor via split-count growth, which vestigekv
sidesteps by holding its attended set near $\rho S$). Crossover at
\serveCrossover{}; \serveSpeedTwoFiveSix{} at 256k, \serveSpeedMax{} at
496k: \textbf{this is a long-context method}, deployment-dependent in the
direction that understates the method --- this testbed's large $C_0$ and $T_{\mathrm{net}}$
\emph{suppress} the ratio; single-device or NVLink lifts the table.

\paragraph{Where it does not buy anything.} The \emph{start} of a request is its slowest
stretch --- the adaptive calibration window (8--64 decode steps) fits the
conformal certificate on live queries, a per-request cost bounded by the
window and independent of $S$ (the first bucket runs ${\sim}0.2$\,ms/step
above steady state; gone well before 112k). It is \emph{not} what keeps
bs=1 throughput near dense at 64k$+$4k (the transient is under $0.3\%$ of
a 4k-token decode): that near-parity is the steady-state attention share
--- $7\%$ of the step at 64k --- where compression's saving is roughly
offset by the recall scan; the advantage appears where attention's share
grows, with batch and with context length. Two scope notes: the advantage is \emph{speed, not memory} (both tiers
GPU-resident; nothing is deleted; host-offload is future work), and
launch-to-launch network variance is $\pm 15\%$ on the fixed terms, so
each arm's curve is one launch and cross-arm claims quote $\Delta$-vs-4k.
A stage-separated profile confirms the account at kernel level (attention
$8.8\%\to20.8\%$ of GPU-busy from batch 1 to 16; the compressed kernel
shrinks $4.3\times$--$15\times$).
One property classic sparse attention does not have: the attention volume
is not a fixed budget. The tier-1 set is \emph{proportional} ($\rho T$
plus the tail; the only schedule measured quality-flat in length), and the
fired set is query-dependent, zero to a measured worst case of 3{,}983
rows (the theoretical worst case attends the whole cache exactly ---
quality unharmed, one step's latency degenerates). An \emph{optional} cap
--- ArkVale's budgeted top-$j$ on our certified scores --- hard-bounds
per-step work where a latency SLO needs it; the mainline is uncapped.
This MoE-dominated deployment is the \emph{conservative} end of the
envelope: on an attention-dominated stack (dense decoder, single device)
the same slope ratio acts on a far larger share of the step, and the
end-to-end ratio approaches the attention-only account of
Section~\ref{par:reads}. Both arms of every comparison run the same
graphs, and every serving number comes from \texttt{sglang}'s own
benchmark module (Section~\ref{sec:audit}).

\dividend{the kept set is frozen before any query exists, so the sparse
pattern is a compile-time constant of the decode graph: selection costs
zero on the token path.}{a RoPE selector must consult the live query,
forcing an out-of-graph rebuild per step --- the pattern cannot be
captured.}

The digest obstruction under RoPE --- why ArkVale/Quest-style page summaries are structurally loose under rotation and tighten under NoPE --- is detailed in Appendix~\ref{sec:arkvale}.

\section{Experiments and recommended configuration}\label{sec:exp}
\textbf{Protocol.} Every serving number in this paper follows the
\emph{continuous-generation} protocol, because that is the deployment form:
bs=1 latency is one request prefilled at 4k and decoded continuously to
512k (decode-time block closes and the live archive run throughout, so each
step's cost is the system's true per-step cost at that length), and
throughput is 64k-prefill/4k-decode requests swept over batch size. No
fixed-decode-at-a-prefill-length point is quoted as a performance claim.
Both arms run from one source tree --- the baseline is the wrapped base
itself (\texttt{--attention-backend triton}) --- each arm's curve is
collected within a single launch, and every launch log prints the same
content-level weight fingerprint. Quality tables report the deployed
two-tier form; the tier-1 ablation is demoted to
Appendix~\ref{app:tier1} (recovery \ttxOTE{} at $128\times$).

\begin{figure}[H]
\centering
\begin{minipage}[t]{0.47\linewidth}
\includegraphics[width=\linewidth]{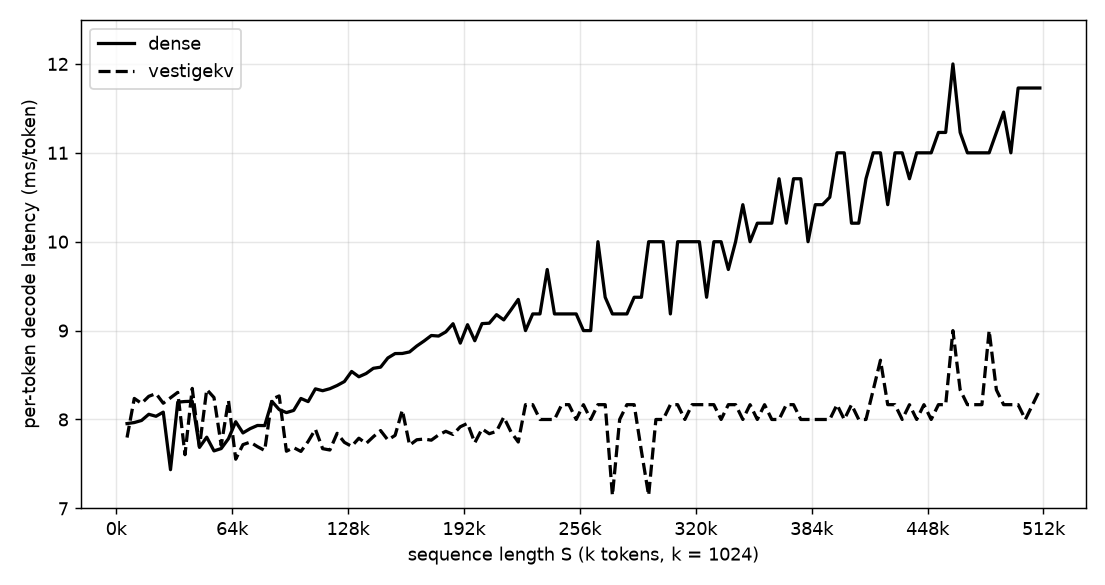}
\end{minipage}\hfill
\begin{minipage}[t]{0.445\linewidth}
\includegraphics[width=\linewidth]{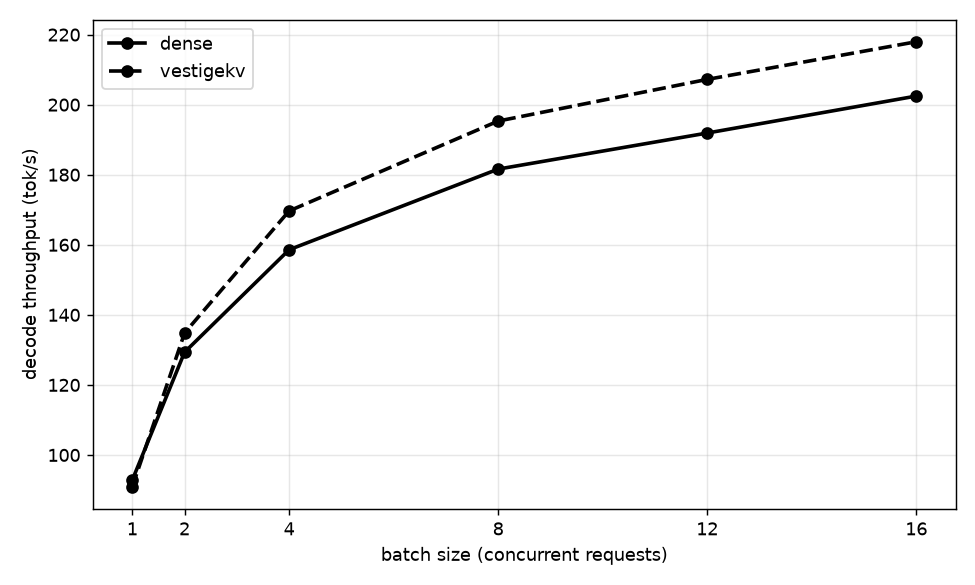}
\end{minipage}
\caption{\small\textbf{Left:} bs=1 continuous decode, one launch per arm,
one statistic (median per 4096-token bucket; client records to 224k,
server decode log beyond; splice invisible by construction). Crossover at
\serveCrossover{}; the vestigekv attention slope is flat at ${\sim}0.7$\,ns
per step per cached token vs.\ dense $7.6$; \serveSpeedTwoFiveSix{} at 256k, \serveSpeedMax{} at 496k. The
left-edge elevation is the calibration transient; spikes are the
block-close sawtooth. \textbf{Right:} throughput, 64k prefill $+$ 4k
decode, bs $\le$ 16 (memory cap): the advantage grows with batch
(attention does not amortize across the batch; shared weights do) --- launch-noise even at
bs=1, \tputGainTwelve{} at bs=12.}
\label{fig:curve}
\end{figure}

Primary metric: needle-retrieval intact rate ($\Delta$NLL $<1$ nat), uncompressed
baseline gated (all trials passed); gates and process in
Section~\ref{sec:audit}. Per-layer proxies are demoted: in \nProxyInversions{} recorded
cases a per-layer proxy mispredicted end-to-end retrieval, the worst a learned selector at $+0.41$
offline / $-0.71$ end-to-end.

% (tier-1 ablation tables moved to App.~\ref{app:tier1} per the deployment-first revision)
Three further results, tables in Appendix~\ref{app:exp_tables}: (i) the
frozen dual-arm control --- the identical operator collapses on RoPE
(\vkEightKxThirtyTwo{} NoPE vs.\ \ropeVKxThirtyTwo{} RoPE at $32\times$;
Table~\ref{tab:rope}); (ii) the recall tier restores what eviction loses
(\vkEightKxThirtyTwo\,$\to$\,\ttxThirtyTwo{} at $32\times$,
\vkEightKxOTE\,$\to$\,\ttxOTE{} at $128\times$) while admitting only
\ttMaskExtra{} extra rows per query, and a larger sketch rank cuts fire
rate while recall \emph{rises} (Table~\ref{tab:tier}); (iii) teacher-forced
$\Delta$CE is effectively lossless at the standard configuration
(${\sim}0.0006$ bits/byte at $32\times$), while the recent-only floor
matches on CE yet degrades the needle answer by $+15$ nats of NLL --- continuation loss cannot
arbitrate retrieval, which is why needle is primary
(Table~\ref{tab:bpb}).

At $8\times$ retrieval is lossless (\vkEightKxEight); quality at fixed ratio
\emph{improves} with length (\vkEightKxOTE{} at 8k $\to$ \vkThirtyTwoKxOTE{} at 32k,
$128\times$).

\subsection{The deployment path measures the same}\label{sec:deploy}
The tables above come from an HF-forward harness (\texttt{harness/} in the
code release) that records exactly which rows selection and recall keep and
fetch. Two links tie it to the deployed system, both reproducible from the
release. \emph{Selection parity}: the fused kernels' fired-and-fetched row
sets are bit-identical to the harness's on shared inputs (registered
parity tests), and a per-step row-set assertion holds over a full
benchmark (Section~\ref{sec:audit}). \emph{Serving quality}: on the
serving path with recall live every step, VestigeKV is statistically
indistinguishable from dense --- Table~\ref{tab:quality}: 64-shot gsm8k
\gsmVk{} vs.\ \gsmBase{}, MAUVE \mauveVkFused{} vs.\ \mauveEngFused{}
(shared per-context seeds), teacher-forced $\Delta$CE median \dceVkMed{}
nats (Table~\ref{tab:quality}). Because the fired set is bit-identical between
harness and port, the retrieval and language-modeling numbers transfer to
the deployment unchanged.

\subsection{Recommended configuration}\label{sec:config}
\textbf{One quality knob, everything else derived or structural.} The
deployment exposes exactly one quality parameter, the recall target
$\tau=0.90$; the rest of the trigger calibration is a closed-form chain:
$\alpha=(1-\tau)/2$ (the gate's false-close budget),
$\tau_s=\tau/(1-\alpha)$ (the scan-level target that leaves room for the
gate), $z = z_{(k)}$ with $k=\lceil (n{+}1)\tau_s\rceil$ (the conformal
quantile, closed form over the calibration scores),
$\mathrm{min\_hard}=\lceil \tau_s/(1-\tau_s)\rceil = 18$ (the smallest
calibration set that can certify $\tau_s$), an $8\!\to\!64$ adaptive
calibration window, and a $Z_{\max}=8$ clamp. Distinct from the knob are
the \emph{structural constants} --- not tuned per deployment, each with a
stated reason: $\rho=1/32$ (the compression ratio, an experimental axis),
$\kappa=16$ (low-pass bandwidth), $r=64$ (index rank), 4 sinks, a
one-block recent window, and the 4096-token close granularity that anchors
$\sigma$'s scale invariance.

An optional per-layer index cascade preserves recovery exactly and cuts
the scan term to $0.65$--$0.79\times$ (its uniform variant is refuted;
Appendix~\ref{app:tier1}).

The ratio floor for a constant-$m$ schedule is $1/32$ --- the only
measured ratio flat in length (Table~\ref{tab:main}; deeper ratios degrade
precisely at long context, where a floor binds); tier-2 recovery extends
depth at length (\mFiveTKTier{} at an instrumental $256\times$ in the
deployment loop).
Every default traces to a pre-registered measurement; the full table
(each value with its basis) is Appendix~\ref{app:exp_tables},
Table~\ref{tab:config}. The load-bearing facts: $\rho{=}1/8$ is lossless
and $1/32$ the default; the sketch rank stays at $r{=}64$ because the
GPU-resident archive makes fetches free (larger $r$ buys only a lower fire
rate); the fetch is uncapped by default (the fired set is the
certificate's decision, not a quota; $\mathrm{topj}{=}16$ is an optional
worst-case-latency bound); the trigger $z$ and the entropy gate derive in
closed form from the single target $\tau{=}0.90$.

\dividend{the entire configuration self-calibrates from the context being
compressed --- possible because labels, thresholds and the sketch basis are all
functions of one fixed bilinear form.}{calibration targets move with query
position; every threshold must chase a distribution that the compression step
cannot observe.}

\section{Measurement integrity}\label{sec:audit}
Every gate was driven with a known-bad input and shown to refuse it before
being trusted; every threshold was frozen, dated, before its data; all
wall-clock numbers come from \texttt{sglang}'s own benchmark module on one
source tree, with a per-decode-step attended-row-set assertion proven on
both arms (zero violations) and a content-level weight fingerprint on
every launch. The full protocol --- gates, Wilson intervals, the row-set
invariant, and the serving methodology --- is Appendix~\ref{app:integrity}.

\section{Related work}
Trained sparse attention \citep{yuan2025nsa,lu2025moba} buys
content-aware patterns with pre-training; VestigeKV inherits its pattern
from a channel that NoPE training has already repurposed, at zero training
cost --- the kinship is structural, the provenance is not.
Observed-attention eviction
\citep{zhang2023h2o,li2024snapkv,oren2024tova,xiao2024streamingllm} selects on
attention already seen; Section~\ref{sec:problem} measures the setting where that
information does not yet exist. Branch-differential studies
\citep{ma2026irminsul,ma2026kamera} measure the \emph{rotated} branch and report
that rotation destroys its position-free structure (branch-only AUC $0.43$),
leaving the unrotated case open --- the case measured here. ArkVale \citep{chen2024arkvale} deserves the closest comparison: the
evict-summarize-recall architecture is theirs, and VestigeKV inherits it.
Four differences, one theoretical. (i)~\emph{Signal}: query-history page
importance vs.\ a query-independent signal that exists before any query
does (Section~\ref{sec:problem}, where history has nothing to read).
(ii)~\emph{Granularity}: page-level lossy digests vs.\ row-level
bit-exact latents plus a 16-bit index. (iii)~\emph{Admission}: a top-$k$
page \emph{quota} vs.\ an \emph{uncapped} calibrated competition against
the query's kept-tier maximum --- the fired set is a per-query decision,
zero to everything (their budgeted rule survives in ours as an optional
worst-case-latency bound, Section~\ref{sec:serving}).
(iv)~\emph{Certificate}: a digest bounds page relevance only up to a
bounding volume valid over the whole rotation orbit --- under RoPE no
tight query-uniform bound exists (Appendix~\ref{sec:arkvale},
Table~\ref{tab:arkvale}) --- whereas the fixed NoPE bilinear form makes
the per-row Cauchy--Schwarz bound query-uniform, which a conformal
threshold can then calibrate; stationarity
(Corollary~\ref{cor:stationary}) additionally freezes the ranking into a
capture-time constant. ArkVale is the right architecture attached to a
score family where no tight certificate can exist; NoPE is where it
acquires a proof.
Page-bound selection \citep{tang2024quest} inherits the same digest
obstruction; trained compression
\citep{lin2025matryoshkakv,gelberg2026kvcat} changes the model, which VestigeKV
does not. Per-step exact merging under RoPE \citep{tian2025keepkv} is the strongest
possible there; Proposition~\ref{prop:impossible} shows the query-universal version
exists only under NoPE$+$MLA. Architecture from \citep{deepseek2024v2}; the NoPE
deployment from \citep{kimi2025linear}; NoPE's length-generalization case from
\citep{kazemnejad2023nope,wang2024lengthgen}. Appendix~\ref{app:dividends} derives the dividend each cited
method inherits under NoPE.

\section{Limitations}
Scope boundaries (full table: Appendix~\ref{app:limits}): one measured
model, Kimi Linear 48B Base --- the only released NoPE-MLA checkpoint
(K3's Gated-MLA is covered by gate-commutation; its salience content is
untested); selector ordering replicates on a NoPE GQA hybrid, eviction
depth does not (depth is MLA-conditional); the depth's mechanism is open;
per-step consultation is probabilistic (0.9--1.0, 1.00 end to end); long-
context suites are future work.

\section{Outlook: training against this mathematics}\label{sec:outlook}
Everything above is post-hoc. If training is allowed, the theory marks
prospects (not claims): a trained index channel could make sketch residuals
small by construction, reviving \emph{guaranteed}-recall sparse attention;
training toward position-free latents on repeated spans would make
Proposition~\ref{prop:impossible}'s collapse live, taking cache size to
$O(\text{distinct content})$; and eviction-aware fine-tuning
\citep{gelberg2026kvcat} has a uniquely stable target under NoPE. All
conditional on scale.

\section{Conclusion}
A production NoPE-MLA model keeps, in a channel its architecture no
longer needs, a trained record of which tokens matter: reading it costs
\readCost{} of the cache, acting on it compresses $8$--$32\times$ with
retrieval intact, a recall tier restores $128\times$, and none of it
survives rotation. The vestige is the signal.

\ifdefined\NAMEDBUILD\else
\section*{The Use of Large Language Models}
In accordance with the ICLR policies on LLM usage, we disclose that LLMs
were used substantially: (i)~coding and experiment orchestration for the
serving-stack implementation and benchmark campaigns; (ii)~proof drafting
and checking; (iii)~literature search; (iv)~drafting and polishing the
writing. LLMs were not used to fabricate or select results, and no claim
entered the paper on LLM output alone: the research direction, the
pre-registered decision rules (frozen before data), and final verification
are the authors' --- every proof was checked manually, every reported
number regenerates from archived run records, and the gates of
Section~\ref{sec:audit} were driven with known-bad inputs before being
trusted. The authors take full responsibility for all content.

\section*{Reproducibility statement}
The supplementary archive contains: \nVerdicts{} pre-registration files with
decision rules dated before their data and verdicts appended after, including
retracted predictions; every run record (JSON) behind every number, with the
script that regenerates all numeric macros in this
paper from those records; the full harness with its gates; and the exact
commands, seeds, and ordering of every GPU run. Raw activation dumps are
excluded for size and are regenerated by the included extraction script from
the public checkpoints.

\fi

\bibliography{refs}
\bibliographystyle{iclr2027_conference}

\appendix
\section{Proofs}\label{app:proofs}
Lemma~\ref{lem:exch} is proved where stated. Full proof of
Proposition~\ref{prop:impossible}:

\begin{proof}[Proof of Proposition~\ref{prop:impossible}]
Suppose $\sum_g e^{b_g}e^{\langle Q,C_g\rangle}v(C_g)
=\sum_u e^{\langle Q,\tilde C_u\rangle}v(\tilde C_u)$ and the same identity without
the $v$ factors, for all $Q$ in an open set. For pairwise-distinct $C_i$, the
functions $Q\mapsto e^{\langle Q,C_i\rangle}$ are characters of $(\mathbb{R}^d,+)$
and hence linearly independent on any open set; matching coefficients forces the
representative multiset to contain every distinct row with $e^{b_g}$ equal to its
multiplicity, so $B\ge\#\{\text{distinct rows}\}$. Under RoPE the displayed distance
bound makes all $T$ rows pairwise distinct. Under NoPE$+$MLA, if $c_v=\alpha c_u$
($\alpha>0$) and $r_v=r_u$, positive homogeneity of RMSNorm gives
$\hat c_v=\hat c_u$, hence identical rows: the merge class strictly contains vector
equality. The value side is consistent because $v$ is a function of the row --- in
standard attention, with $V$ cached independently, no such consistency holds and
the numerator identity is not even well-posed.
\end{proof}

\ifdefined\NAMEDBUILD\else

\fi

\section{The digest obstruction under RoPE}\label{sec:arkvale}
The prior recall design \citep{chen2024arkvale} scores an evicted page $K$ by a
bounding-sphere digest, $I(q,K)=\max_{k\in K}q\cdot k\le q\cdot c+r\lVert q\rVert$,
sound only when $r$ encloses every key --- and adopts a non-enclosing radius because
the sound one overestimates, giving up the guarantee. Under RoPE that dilemma is
structural. A page is $P$ consecutive positions of cached keys $R_uk_u$; even for
\emph{constant} content $k$,
\begin{equation}
\lVert R_uk-R_vk\rVert^2=\sum\nolimits_j 4\sin^2\tfrac{\theta_j(u-v)}{2}\,
\lVert k^{(j)}\rVert^2
\;\Longrightarrow\;
r^2\;\gtrsim\;\sum\nolimits_{\theta_j\ge 2/P}\lVert k^{(j)}\rVert^2,
\label{eq:radius}
\end{equation}
so the radius is of order the fast-frequency key norm regardless of content
coherence: the digest cannot be tight. The center fares no better --- per frequency
pair the page mean carries the Dirichlet factor
\begin{equation}
\Big|\tfrac1P\sum\nolimits_{u=0}^{P-1}e^{i\theta_j u}\Big|
=\frac{|\sin(P\theta_j/2)|}{P\,|\sin(\theta_j/2)|}\;\ll\;1
\quad\text{for fast }\theta_j,
\label{eq:center}
\end{equation}
so $q\cdot c$ is blind to exactly the dimensions inflating $r$. NoPE removes both at
the root --- rows are content-only --- and replaces the estimate with structure the
digest never had: an exact summand (Lemma~\ref{lem:cert}), a per-token certificate
with a measured accuracy knob, and by Section~\ref{sec:math}(a) a fixed winner set,
so rankings may be prefetched where the RoPE family $A_h(\Delta)$ forces re-ranking
every step.

\begin{table}[H]
\centering\small
\caption{Each ArkVale-style limitation, its mechanism, and the measured effect of
removing rotation. Same digest protocol on both models (32-token pages, sphere
digests per their Eq.~1--2); recall@8 pages of the true-argmax page, worst layer ---
the tail-risk statistic that decides adoptability.}
\label{tab:arkvale}
\begin{tabular}{l>{\raggedright\arraybackslash}p{4.0cm}cc}
\toprule
limitation & mechanism & RoPE & NoPE\\
\midrule
digest looseness & Eq.~\ref{eq:radius}: orbit-inflated radius
 & \spreadInfl{} spread & (removed)\\
worst-layer recall (heuristic $r$) & deep layers position-dominated
 & \pdDsvWorst & \pdKimiWorst\\
worst-layer recall (sound $r$) & sound radius unusable
 & \pdDsvSoundWorst & \pdKimiSoundWorst\\
per-step re-ranking & winner set sweeps with position
 & \unionDsv{} union & \unionKimi{} union\\
\bottomrule
\end{tabular}
\end{table}

\dividend{exact summands, per-token certificates, layer-uniform recall, and
prefetchable rankings --- the parts of a recall tier that mathematics can supply.}{
every one of these reduces to a heuristic estimate; the residual risk hides in
specific layers (\pdDsvWorst{} at the worst), which is the adoption-blocking tail
risk.}

\section{Experiment tables (harness)}\label{app:exp_tables}

\begin{figure}[H]
\centering
\begin{tikzpicture}[x=0.017cm,y=1cm,font=\small]
  \node[anchor=west] at (0,0.95) {cache row $\tilde C_u$ (576 dims), one per token per layer};
  \draw[fill=gray!12] (0,0) rectangle (512,0.55);
  \draw[fill=orange!25] (512,0) rectangle (576,0.55);
  \node at (256,0.28) {content latent $\hat c_u$: 512 dims, RMSNormed (direction only)};
  \node at (544,0.30) {$r_u$};
  \draw[->,thick,orange!70!black] (544,-0.06) -- (544,-0.50);
  \node[anchor=north east] at (576,-0.55) {$\sigma_u=\lVert r_u-\text{low-pass}(r)_u\rVert
    \;\Rightarrow\;$ keep top-$m$ rows exactly, evict the rest};
\end{tikzpicture}
\caption{VestigeKV reads only the branch (orange, \readCost{} of the row). Kept rows
are exact; nothing else is stored.}
\label{fig:method}
\end{figure}

\begin{figure}[H]
\centering
\begin{tikzpicture}[x=0.05cm,y=0.8cm,font=\footnotesize]
  \draw[->] (0,0) -- (205,0) node[right] {$t$};
  \draw[fill=gray!15] (0,0.10) rectangle (100,0.50);
  \draw[fill=orange!40] (36,0.10) rectangle (40,0.50);
  \draw[fill=blue!12] (100,0.10) rectangle (200,0.50);
  \node at (150,0.30) {future queries};
  \node at (17,0.30) {prefix};
  \node[orange!60!black,anchor=north] at (30,-0.10) {needle};
  \draw[orange!60!black] (34,-0.10) -- (38,0.08);
  \draw[very thick,red!70!black] (100,-0.55) -- (100,0.60);
  \node[red!70!black,anchor=north] at (108,-0.60) {compression};
  \draw[->,thick,blue!60!black] (168,0.58) .. controls (120,1.45) .. (40,0.62);
  \node[blue!60!black,anchor=south] at (112,1.22) {the attention that makes the needle matter};
\end{tikzpicture}
\caption{The needle's importance is conferred entirely by queries that do
not exist at compression time; H2O accumulates prefix attention, SnapKV
reads a window before the cut --- both see nothing
(Table~\ref{tab:baselines}).}
\label{fig:timeline}
\end{figure}

\begin{table}[H]
\centering\small
\caption{Deployed-path quality on the final fused code (bars frozen before
data; both numbers verbatim). gsm8k runs 64-shot so the prompt
(${\sim}9$k tokens) crosses the close granularity and compression is
genuinely engaged; MAUVE generations share per-context sampling seeds
across arms. Units: accuracy gaps read in percentage points, MAUVE in raw
score (a divergence-derived quantity; relative percentages are not
meaningful for either), and only the $\Delta$CE row carries a natural
relative reading ($+0.17\%$ of the model's bits/byte in aggregate:
\bpbEng{}$\to$\bpbVk{}).}
\label{tab:quality}
\begin{tabular}{lccc}
\toprule
 & dense (triton) & vestigekv & bar\\
\midrule
gsm8k 64-shot ($n{=}800$) & \gsmBase & \gsmVk & $\ge$ dense $-0.07$ (2$\sigma$)\\
MAUVE (16 ctx, gpt2-large) & \mauveEngFused & \mauveVkFused & $\ge$ dense $-0.10$\\
teacher-forced $\Delta$CE (nats) & \multicolumn{2}{c}{\dceVkMed{} median (bit-equivalence transfer)} & $\le 0.05$\\
\bottomrule
\end{tabular}
\end{table}

\begin{table}[H]
\centering
\footnotesize\setlength{\tabcolsep}{4pt}
\begin{tabular}{lll}
\toprule
measurement & NoPE (Kimi) & RoPE (DSV2)\\
\midrule
branch/content row norm (max) & \rowNormMaxKimi$\times$ & \rowNormMaxDsv$\times$\\
score-variance share (11\% of dims) & \scoreVarShare & (branch is positional)\\
top-1 retention, branch only & \branchTopOne & n/a\textsuperscript{$\ast$}\\
top-1 retention, content only & \contentTopOne & n/a\textsuperscript{$\ast$}\\
normalization & \multicolumn{2}{l}{content is RMSNormed; the branch is the row's only magnitude path}\\
\bottomrule
\end{tabular}
\caption{The branch is the trained salience channel. Static rows from weights alone;
behavioral rows from 512 real queries $\times$ 32 heads.
$^{\ast}$Retention is a norm-type ranking and rotations are isometries, so
the statistic is insensitive to RoPE by construction (measured: 0.99 on DSV2
as well) --- it cannot discriminate the arms. The discriminating measurement
is the end-to-end dual-arm selector (\ropeVKxThirtyTwo{} on DSV2).}
\label{tab:anatomy}
\end{table}

\begin{table}[H]
\centering
\caption{Tier-1 ablation. Left: dual-arm acceptance, frozen before either arm ran: work on NoPE
\emph{and} fail on RoPE (DeepSeek-V2-Lite, matched layers, $32\times$). Right:
branch-width ablation --- post-hoc PCA truncation before $\sigma$; monotone
degradation below 32 dims.}
\label{tab:rope}
\small
\begin{tabular}{lcc}
\toprule
 & NoPE & RoPE\\
\midrule
VestigeKV & \vkEightKxThirtyTwo & \ropeVKxThirtyTwo\\
full-row eviction & \fullEightKxThirtyTwo & \ropeEvictxThirtyTwo\\
top-1 union & \unionKimi & \unionDsv\\
\bottomrule
\end{tabular}\hspace{1.4em}
\begin{tabular}{lccccc}
\toprule
branch dims & 4 & 8 & 16 & 32 & 64\\
\midrule
@ $32\times$  & \ablFourxThirtyTwo & \ablEightxThirtyTwo & \ablSixteenxThirtyTwo & \ablThirtyTwoxThirtyTwo & \ablSixtyFourxThirtyTwo\\
@ $128\times$ & \ablFourxOTE & \ablEightxOTE & \ablSixteenxOTE & \ablThirtyTwoxOTE & \ablSixtyFourxOTE\\
\bottomrule
\end{tabular}
\end{table}

\begin{table}[H]
\centering\small
\caption{Left: end-to-end recovery ($L{=}8192$, 24 trials) --- the tier restores
what eviction loses, admitting only \ttMaskExtra{} extra rows per query. Right: the
index-rank knob --- a larger sketch cuts trigger rate and fetch volume while recall
\emph{rises} (tighter bounds rank better). Rows/query is a document-perplexity
figure; retrieval-heavy steps legitimately fetch more (5--37 rows per layer at
$r{=}192$ on needle contexts, recovery still 1.00).}
\label{tab:tier}
\begin{tabular}{lcc}
\toprule
 & $32\times$ & $128\times$\\
\midrule
eviction only & \vkEightKxThirtyTwo & \vkEightKxOTE\\
$+$ recall tier & \ttxThirtyTwo & \ttxOTE\\
\bottomrule
\end{tabular}\hspace{1.6em}
\begin{tabular}{lcccc}
\toprule
sketch rank $r$ & 64 & 128 & 192 & 256\\
\midrule
fire rate & \rsSixtyFourFire & \rsOneTwoEightFire & \rsOneNineTwoFire & \rsTwoFiveSixFire\\
rows/query & \rsSixtyFourRows & \rsOneTwoEightRows & \rsOneNineTwoRows & \rsTwoFiveSixRows\\
hard recall & \rsSixtyFourRec & \rsOneTwoEightRec & \rsOneNineTwoRec & \rsTwoFiveSixRec\\
\bottomrule
\end{tabular}
\end{table}

\begin{table}[H]
\centering\small
\caption{General language-modeling cost ($\Delta$CE, nats/token, held-out; the
top row is the deployed two-tier form, the lower rows its ablations
continuation, 6 docs). The standard configuration is effectively lossless
(${\sim}0.0006$ bits/byte at $32\times$ for ${\sim}4$-byte tokens). The
recent-only floor matches on CE while scoring $+15$ nats on needle --- continuation
loss alone cannot arbitrate retrieval, which is why needle is the primary metric.}
\label{tab:bpb}
\begin{tabular}{lccc}
\toprule
$\Delta$CE & $8\times$ & $32\times$ & $128\times$\\
\midrule
eviction $+$ recall tier (standard) & \bpbTTxEight & \bpbTTxThirtyTwo & \bpbTTxOTE\\
eviction only & \bpbDigxEight & \bpbDigxThirtyTwo & \bpbDigxOTE\\
recent-only floor & \bpbRecxEight & \bpbRecxThirtyTwo & \bpbRecxOTE\\
\bottomrule
\end{tabular}
\end{table}

\begin{table}[H]
\centering\small
\begin{tabular}{ll>{\raggedright\arraybackslash}p{5.7cm}}
\toprule
parameter & recommended & basis\\
\midrule
ratio $\rho$ & $1/8$ lossless; $1/32$ default & Table~\ref{tab:main}\\
bandwidth $\kappa$ & 16 ($L\le16$k); 64 ($L\ge32$k) & tracks context length\\
sinks / recent & first 4 / one block & sink dominance of attention mass\\
sketch rank $r$ & 64 & minimizes per-step scan bytes (reads
 26\% of full at $r{=}64$ vs 47\% at $r{=}192$; the 16-bit index halves both);
 larger $r$ buys only a lower fire rate (\rsOneNineTwoFire{} at $r{=}192$,
 recall \rsOneNineTwoRec) that the fused pipeline no longer needs\\
fetch & uncapped (default) & the fired set is the certificate's decision,
 not a quota; $\mathrm{topj}{=}16$ is an optional worst-case-latency
 optimization (recall 0.93--0.98 across data families at the cap)\\
trigger $z$ & conformal quantile at $\tau_s$, from $\tau{=}0.90$ & closed form; in-context; no external corpus\\
entropy gate & auto-off if cal.\ fire rate $>60\%$ & self-deciding; saves the scan only where it filters\\
tier-2 placement & GPU-resident & fetch free; per-step reads
 $1152\to{\sim}294$ bytes/token (16-bit index)\\
\bottomrule
\end{tabular}
\caption{Empirical defaults. Every value traces to a pre-registered measurement.}
\label{tab:config}
\end{table}

\section{Tier-1 ablation detail}\label{app:tier1}
The deployed form always runs both tiers; the tables below switch the recall
tier \emph{off} to isolate the selector. They support one sentence of the
main text --- without the recall, the partition is just lossy eviction ---
and are not final quality.

\begin{table}[H]
\centering
\caption{Tier-1 ablation (recall tier off; not the deployed form). Rates at
8k and 32k pool seeds 11$+$12 under the frozen rule of the pre-registration
archive ($n{=}24$--$36$); 65k is single-seed ($n{=}12$).
Branch-only selection (VestigeKV, reads \readCost{} of each row) vs.\
full-row selection at matched budget.}
\label{tab:main}
\footnotesize
\begin{tabular}{lccccccccc}
\toprule
 & \multicolumn{3}{c}{$L{=}8192$} & \multicolumn{3}{c}{$L{=}32768$} & \multicolumn{3}{c}{$L{=}65536$}\\
 & $32\times$ & $64\times$ & $128\times$ & $32\times$ & $64\times$ & $128\times$ & $32\times$ & $64\times$ & $128\times$\\
\midrule
VestigeKV & \vkEightKxThirtyTwo & \vkEightKxSixtyFour & \vkEightKxOTE & \vkThirtyTwoKxThirtyTwo & \vkThirtyTwoKxSixtyFour & \vkThirtyTwoKxOTE & \vkSixtyFiveKxThirtyTwo & \vkSixtyFiveKxSixtyFour & \vkSixtyFiveKxOTE\\
Full-row selection & \fullEightKxThirtyTwo & \fullEightKxSixtyFour & \fullEightKxOTE & \fullThirtyTwoKxThirtyTwo & \fullThirtyTwoKxSixtyFour & \fullThirtyTwoKxOTE & \fullSixtyFiveKxThirtyTwo & \fullSixtyFiveKxSixtyFour & \fullSixtyFiveKxOTE\\
\bottomrule
\end{tabular}
\end{table}

\begin{table}[H]
\centering
\caption{Tier-1 ablation (recall tier off). Needle intact rate when compression precedes the query ($L{=}8192$, 24
trials; $L{=}32768$, 12 trials). The recent-window variant rules out a budgetary
explanation for the collapse (Section~\ref{sec:problem}). These methods remain strong in their design setting
(query present at compression); this is the other setting.}
\label{tab:baselines}
\begin{tabular}{lccc}
\toprule
 & $8\times$ & $32\times$ & $128\times$\\
\midrule
H2O \citep{zhang2023h2o} & \blHtwoOxEight & \blHtwoOxThirtyTwo & \blHtwoOxOTE\\
SnapKV \citep{li2024snapkv} & \blSnapKVxEight & \blSnapKVxThirtyTwo & \blSnapKVxOTE\\
H2O $+$ recent (half budget) & \blHRecxEight & \blHRecxThirtyTwo & \blHRecxOTE\\
StreamingLLM \citep{xiao2024streamingllm} & 0.00 & 0.00 & 0.00\\
VestigeKV & \vkEightKxEight & \vkEightKxThirtyTwo & \vkEightKxOTE\\
\midrule
H2O / SnapKV, $L{=}32768$ & \blHtwoOLxLEight\,/\,\blSnapKVLxLEight & \blHtwoOLxLThirtyTwo\,/\,\blSnapKVLxLThirtyTwo & \blHtwoOLxLOTE\,/\,\blSnapKVLxLOTE\\
VestigeKV, $L{=}32768$ & \vkThirtyTwoKxEight & \vkThirtyTwoKxThirtyTwo & \vkThirtyTwoKxOTE\\
\bottomrule
\end{tabular}
\end{table}

\section{Measurement integrity: full protocol}\label{app:integrity}
Every check below was driven with a known-bad input and shown to refuse it before
being trusted; every threshold in this paper was frozen, dated, before its data.
All reported rates are exact fractions over the stated trial counts;
Wilson 95\% intervals for every primary rate are in
Appendix~\ref{app:wilson}.

Table~\ref{tab:gates} (Appendix~\ref{app:wilson}) lists each gate with the
known-bad input it was proven to catch.

\paragraph{Executable row-set invariant.} The serving port extends the
same discipline to the deployment: a per-decode-step assertion
(\texttt{SGLANG\_DEBUG\_VESTIGEKV\_ROWS}) checks, on every pipeline rank and
every layer, that the attended row set is the intended one --- the kept
table exists, no active slot is empty, the baseline arm attends the full
prefix, the compressed arm attends well below it. Each failure mode is
unit-tested to fire on the broken form before being trusted, and the final
benchmark campaign ran a full proof round with the assertion enabled on
both ranks and both arms across all batch sizes: zero violations.

\paragraph{Serving-stack performance methodology.} All wall-clock numbers
are produced by \texttt{sglang}'s own benchmark module
(\texttt{sglang.benchmark.serving} with per-token ITLs saved raw), with no
benchmarking code of our own; the orchestration only launches the server
and switches the arm. Both arms run from one source tree with identical
CUDA-graphed code paths, each arm's curve is collected within a single
launch (fixed launch-state terms cancel in the $\Delta$-vs-4k reading),
above ${\sim}$224k context the server-side decode log is the per-token
authority (client-side inter-token stamps degrade under stream batching,
a measured artifact), and every launch log prints the checkpoint's
content-level fingerprint so all arms provably load identical weights.

\section{Scope boundaries: full table}\label{app:limits}
Every scope boundary of this study, with its measured status:

\begin{center}\small
\begin{tabular}{l>{\raggedright\arraybackslash}p{8.4cm}}
\toprule
limitation & status\\
\midrule
one measured model & all measurements are on Kimi Linear 48B (the
\emph{Base} checkpoint; post-training survival of the signal is untested).
For Kimi K3's Gated-MLA the machinery extends by the gate-commutation
argument; the branch's salience \emph{content} there is an untested
empirical claim\\
one NoPE-MLA family & Kimi Linear 48B is the only released NoPE-MLA checkpoint;
selector \emph{ordering} replicates on a NoPE GQA hybrid (Granite-4.0-H), eviction
\emph{depth} does not --- depth claims are MLA-conditional\\
mechanism of the depth & open; three pre-registered candidate explanations were
refuted by our own runs, recorded in the pre-registration archive\\
row reachability & existence is constructive (the partition deletes nothing);
per-step consultation is probabilistic (recall 0.9--1.0, 1.00 end to end;
the fired set itself is deterministic by construction given the query).
Pure deletion loses rows irrecoverably ($+15$ nats of NLL on the needle answer)\\
tasks, length & needle probes to 256k, held-out CE at 8k, 64-shot QA
(${\sim}$9k prompts, $n{=}800$) and MAUVE on the serving path; serving
latency measured to 512k. Standard long-context suites are future work\\
\bottomrule
\end{tabular}
\end{center}

\section{Interval estimates for the primary rates}\label{app:wilson}
Needle intact rates are exact fractions over small trial counts; Wilson 95\%
intervals make the resolution explicit. Neighboring ratios at one length are
often not separated at these $n$. The separations the paper leans on hold:
tier-1 vs.\ the recent-window arm (0 successes in every cell) and tier-2
recovery vs.\ its eviction arm. The flatness of $1/32$ across lengths is a
consistency observation (identical counts at all three lengths), not a
separation claim.

\begin{table}[H]
\centering\small
\caption{Wilson 95\% intervals, count/trials [lower, upper]. Tier-1 ablation
rows above the rule; deployment-loop rows below.}
\begin{tabular}{lccc}
\toprule
 & $32\times$ & $64\times$ & $128\times$\\
\midrule
$L{=}8192$ & \wciEightKxThirtyTwo & \wciEightKxSixtyFour & \wciEightKxOTE\\
$L{=}32768$ & \wciThirtyTwoKxThirtyTwo & \wciThirtyTwoKxSixtyFour & \wciThirtyTwoKxOTE\\
$L{=}65536$ & \wciSixtyFiveKxThirtyTwo & \wciSixtyFiveKxSixtyFour & \wciSixtyFiveKxOTE\\
\midrule
recall tier, $128\times$, 8k & \multicolumn{3}{c}{\wciTT}\\
deployment loop, tier-1, $32\times$ & \multicolumn{3}{c}{\wciMFour}\\
deployment tier-2, $512\times$@8k, $256\times$@32k & \multicolumn{3}{c}{\wciMFiveTier, \wciMFiveTK}\\
\bottomrule
\end{tabular}
\end{table}

\begin{table}[H]
\centering
\caption{Gates and process.}\label{tab:gates}
\small
\begin{tabular}{ll}
\toprule
gate & known-bad input it was proven to catch\\
\midrule
identity operator ($\Delta$CE $=0$) & \texttt{eager}/\texttt{sdpa} kernel mismatch ($1.4\times10^{-1}$)\\
uncompressed-baseline gate & trials the base model cannot retrieve\\
hook fire counts & silent non-capture (verdicts about nothing)\\
leakage floor (cache destroyed $\Rightarrow0.00$) & uncompressed layers recovering the needle\\
pre-registration, \nVerdicts{} verdicts & three retracted mechanisms; a rejected learned selector\\
\bottomrule
\end{tabular}
\end{table}

\section{Deployment specification}\label{app:spec}
The serving schedule end to end, for an engineer holding stock weights.
Each item is tagged: [M] = measured in this paper's record; [D] = design,
licensed by the stated result but not itself wall-clock-measured.

\paragraph{Lifecycle of a request.}
\begin{enumerate}
\item \textbf{Below the activation threshold} $L^{*}$: the partition is off;
  the attended set is everything, no index, no compaction --- byte-identical
  to the stock MLA path at zero added cost. [D; the switch-on is
  retroactive-tax-free by the streaming equalities of
  Section~\ref{sec:exp} [M]]
\item \textbf{Block close} (every 4096 rows past $L^{*}$): $\sigma$ from the
  64-dim branch, global top-$m$ rebalance, sinks and open tail always
  attended. [M: trial-exact vs one-shot]
\item \textbf{Compression event} (prefill end): tier-2 index build ---
  64-dim exact summand, rank-$r$ sketch from the context's own queries,
  per-row certificate; $z$ and the entropy gate self-calibrate from the
  prefix; the gate disables itself where it cannot calibrate. [M]
\item \textbf{Per decode step}: attended-tier attention; index scan; fetch
  top-$j{=}16$ fired rows into this step. [M: recovery 1.00 at $128\times$
  and $512\times$/8k, $256\times$/32k in the serving loop]
\item \textbf{Optional per-layer cascade}: stage-1 sidecar shortlist enabled
  per layer iff calibration hard-target recall $\ge 0.90$
  (holdout-verified); scan $\times 0.65$ (8k) / $\times 0.79$ (32k); the
  uniform variant is refuted and must not ship. [M]
\end{enumerate}

\paragraph{Placement and knobs.} Ratio target $1/32$, floor $1/32$ for
constant-$m$ schedules (the only ratio measured flat in length) [M];
sketch rank $r{=}64$: tier-2 is GPU-resident (fetches free), so $r$
minimizes scan bytes [M]; tier-2 is mandatory --- the partition deletes
nothing [M]. Cache rows are never quantized (the controlled-variable rule);
the scan's index metadata is stored 16-bit with fp32-ieee arithmetic and
post-rounding calibration [M].

\paragraph{Asynchrony contract.} All block-close work ($\sigma$, rebalance,
compaction, pre-threshold precomputation) may run off the critical path:
inputs are immutable rows, early/late computation is equivalent by
Corollary~\ref{cor:stationary}, and a lagging update only enlarges the
attended set by one block --- the conservative direction. Sync points: a
per-block write event, a barrier before the index build, an atomic swap of
the attended set. [D, licensed by [M] equalities]

\paragraph{Kernel mapping.} No new kernels: kept rows compact once into
pages (final by Corollary~\ref{cor:stationary}) and run on stock paged
attention; the gate consumes the LSE that kernel already emits; the index
scan is one GEMV per layer; fetched rows merge as one extra split-KV
partition. [D; the LSE-gate substitution itself is measured]

\section{The dividend inside prior algorithms}\label{app:dividends}
\textbf{Claim up front: NoPE confers a family of algorithmic dividends, not
one.} Two structural facts --- the score is a single fixed bilinear form,
and rankings commute with time (Corollary~\ref{cor:stationary}) --- turn
into distinct, unclaimed guarantees inside one published cache-management
algorithm after another once transplanted to NoPE-MLA: statistics become
stationary, voting horizons become unbounded, screening bounds become
admissible, projection objectives become clean, merging acquires an
exactness certificate, and reuse becomes zero-copy. The main text derives
two instances (the selector, Section~\ref{sec:math}; page digests,
Section~\ref{sec:arkvale}); this appendix enumerates the rest. Each is a
derivation from the setup of Section~\ref{sec:math}; none is a new
measurement, and we say so where the measured record limits the claim.

\paragraph{Accumulated-attention eviction (H2O, TOVA).}
These methods score row $u$ by attention mass observed under past queries,
$\hat{s}(u)=\sum_t \mathrm{softmax}\text{-weight}(q_t,u)$
\citep{zhang2023h2o,oren2024tova}. What the statistic estimates:
\begin{equation}
\mathbb{E}\big[\hat{s}(u)\big]=
\begin{cases}
\,n\,\mathbb{E}_{q}\!\left[w\big(\lambda\langle Q,\tilde{C}_u\rangle\big)\right]
 & \text{NoPE: no } t \text{ enters --- one number per row,}\\[2pt]
\,\sum_{t} \mathbb{E}_{q}\!\left[w\big(\lambda\, q^{\top}R_{t-u}W_{K}\tilde{C}_u\big)\right]
 & \text{RoPE: a function of the age profile } \{t-u\}.
\end{cases}
\end{equation}
Under NoPE the estimand is time-invariant, so the estimate transfers to every
future query from the same distribution. Under RoPE a future query at
$t'=t+L_{\mathrm{gen}}$ evaluates the integrand at a rotation the statistic
never sampled; by the band expansion of Corollary~\ref{cor:stationary} the
mismatch oscillates per frequency, so the heavy-hitter statistic has a shelf
life set by the band sum. NoPE removes the shelf life.

\paragraph{Observation-window voting (SnapKV).}
SnapKV votes with the last window's queries immediately before generation
\citep{li2024snapkv}. The score a vote certifies and the score generation
consumes differ, for the same $(q,u)$, by exactly
\begin{equation}\label{eq:votedrift}
s_{t'}(q,u)-s_{t}(q,u)=\lambda\, q^{\top}\big(R_{t'-u}-R_{t-u}\big)W_{K}\tilde{C}_u,
\qquad t'=t+L_{\mathrm{gen}},
\end{equation}
which vanishes identically under NoPE ($R\equiv I$) and oscillates per
frequency band under RoPE. The vote's validity horizon is therefore unbounded
under NoPE and equal to the generation length under RoPE.

\paragraph{Attention sinks (StreamingLLM).}
Equation~\ref{eq:votedrift} applied to a fixed sink row $u$: under NoPE
the sink's score against any query direction is length-invariant, so the
mechanism of \citet{xiao2024streamingllm} is exactly stable; under RoPE the
same difference term rides the lowest frequency bands and drifts with
distance.

\paragraph{Page-bound pruning (Quest).}
Quest skips pages whose per-page score bound falls below the running best
\citep{tang2024quest}. Under NoPE the score is linear in the row for a
fixed form, so interval arithmetic over a per-page box
$B(P)=[\ell,h]\supseteq\{\tilde{C}_u\}_{u\in P}$ gives a true bound,
\begin{equation}
\max_{u\in P}\, \lambda\langle Q,\tilde{C}_u\rangle
\;\le\; \lambda\!\sum_{d}\max\big(Q_d \ell_d,\; Q_d h_d\big),
\end{equation}
and pruning by it is \emph{admissible}: no page containing the argmax is
skipped beyond box slack. Measured caution: admissible is not automatically
useful --- on this checkpoint, boxes over the 256-dim index at page size 64
prune under $0.2\%$ of pages (per-dim maxima over 64 rows exceed every
actual row's score in high dimension); tight page digests are the open
problem, the guarantee is the opening. Under RoPE the left side is
$\max_{u\in P}\lambda\, q^{\top}R_{\Delta_u}W_K\tilde{C}_u$ with
$\Delta_u$ varying \emph{inside} the page, so an honest bound must also
maximize over the rotation orbit --- the radius inflation of
Eq.~\ref{eq:radius} --- and tightness is lost with page span. Combined with
the measured layer-uniformity of NoPE page digests
(Section~\ref{sec:arkvale}), this is the theoretical opening for a
hierarchical recall index with a per-step scan sublinear in context length;
we have not built or measured one.

\paragraph{Trainable projections (MatryoshkaKV).}
For an orthogonal projector $\Pi$ on the content latent, the NoPE score
error is exactly $|\lambda\langle Q,(I-\Pi)\tilde{C}_u\rangle|$ --- a
clean, query-uniform objective \citep{lin2025matryoshkakv}. Under RoPE the
same objective is position-dependent unless $\Pi$ near-commutes with the
whole rotation family,
\begin{equation}
\sup_{\Delta}\big\lVert R_{\Delta}^{\top}\Pi R_{\Delta}-\Pi\big\rVert
\;\approx\;0,
\end{equation}
which ties the projector to the frequency pairing and shrinks the feasible
set to band-aligned subspaces.
The measured record bounds the claim: on the frozen Kimi Linear checkpoint
the read-out is full-rank (rank $512$), so this dividend accrues to future
training (Section~\ref{sec:outlook}), not to the deployed weights.

\paragraph{Lossless merging (KeepKV).}
KeepKV merges cache entries with the ambition of losslessness
\citep{tian2025keepkv}. Proposition~\ref{prop:impossible} is the exactness
certificate that ambition needs, and it is available only under NoPE-MLA:
identical rows merge exactly with an $\ln m$ bias, the RMSNorm homogeneity
class enlarges the merge set, and under RoPE the merge class is empty.
The measured record again bounds the claim: on real corpora the tolerant
merge class is empty at $5\%$ tolerance, so the certificate currently has
no application domain on this checkpoint.

\paragraph{Position-independent reuse (Irminsul, Kamera).}
Cross-request cache reuse \citep{ma2026irminsul,ma2026kamera} is the
exchangeability lemma applied across sequences: a NoPE-MLA row is valid at
any position in any context, so reuse is zero-copy. RoPE reuse pays a
re-rotation of every moved row. These systems exploit the property; the
lemma names it.

\end{document}